\newcommand{\BibTeX}{\textsc{B\kern-0.1emi\kern-0.017emb}\kern-0.15em\TeX}
\newcommand*{\nats}{\mathbb{N}}
\newcommand*{\nonnegints}{\mathbb{Z}_{\geq0}}
\newcommand*{\reals}{\mathbb{R}}
\newcommand*{\posreals}{\mathbb{R}_{>0}}
\renewcommand*{\iff}{\Leftrightarrow}
\newcommand*{\then}{\Rightarrow}
\newcommand*{\id}[1][]{\mathrm{id}_{#1}}
\newcommand*{\vs}{\mathcal{V}}
\newcommand*{\vstoo}{\mathcal{W}}
\newcommand*{\bij}[1][]{\phi_{#1}}
\newcommand*{\vect}[1][u]{#1}
\newcommand*{\vo}[1][]{\preceq_{#1}}
\newcommand*{\svo}[1][]{\prec_{#1}}
\newcommand*{\vspos}[1][\svoi]{\vs_{{#1}0}}
\newcommand*{\vsneg}[1][\svo]{\vs_{{#1}0}}
\newcommand*{\ps}{\mathcal{X}}
\newcommand*{\psall}{\mathcal{X}^\nats}
\newcommand*{\rv}[1][]{X_{#1}}
\newcommand*{\rval}[1][]{x_{#1}}
\newcommand*{\simplex}[1][\ps]{\Sigma_{#1}}
\newcommand*{\g}[1][f]{#1}
\newcommand*{\gambles}{\mathcal{L}}
\newcommand*{\gambleson}[1][\ps]{\mathcal{L}\group{#1}}
\newcommand*{\ind}[1][\ev]{\mathbb{I}_{#1}}
\newcommand*{\posgambleson}[1][\ps]{\gambles\group{#1}_{>0}}
\newcommand*{\neggambleson}[1][\ps]{\gambles\group{#1}_{<0}}
\newcommand*{\gamblesfs}{\bar{\gambles}\group{\psall}}
\newcommand*{\cf}{C}
\newcommand*{\cftoo}{C'}
\newcommand*{\cfdom}{\mathcal{Q}}
\newcommand*{\os}[1][]{O_{#1}}
\newcommand*{\tos}[1][]{\tilde{O}_{#1}}
\newcommand*{\soiv}[1][]{I_{#1}}
\newcommand*{\ec}[1][\vect]{\sqgroup{#1}}
\newcommand*{\qs}[1][\soiv]{\vs/{#1}}
\newcommand*{\sodv}[1][]{D_{#1}}
\newcommand*{\eqrel}[1][k]{\simeq_j}
\newcommandx{\eclexi}[2][1=f, 2=k]{{#1}/\ker\Lambda_{#2}}
\newcommandx{\qslexi}[2][1=\gambleson, 2=k]{{#1}/\ker\Lambda_{#2}}
\newcommand*{\permuts}[1][]{\mathcal{P}_{#1}}
\newcommand*{\permut}[1][]{\pi_{#1}}
\newcommand*{\lift}[1][\permut]{{#1}^t}
\newcommand*{\len}[1][]{n_{#1}}
\newcommand*{\ex}{\mathrm{inv}_{\permuts[\len]}}
\newcommand*{\pia}[1][\rval]{[#1]}
\newcommand*{\cts}{m}
\newcommand*{\cm}[1][]{T^{#1}}
\newcommand*{\cvs}[1][]{\mathcal{N}^{#1}}
\newcommand*{\pigambleson}[1][\ps]{\gambles_{{\permuts[\len]}}\group{#1}}
\newcommand*{\Hymap}[1][]{\mathrm{H}_{#1}}
\newcommand*{\Hymape}[1][]{\tilde{\mathrm{H}}_{#1}}
\newcommand*{\Hymapei}[1][]{{{\tilde{\mathrm{H}}_{#1}}}^{-1}}
\newcommandx{\Hy}[3][1={\len}, 2={\cdot}, 3={\cts}]{\Hymap[{#1}]\group{{#2}\vert{#3}}}
\newcommand*{\qsg}[1][{\soiv[{\permuts[\len]}]}]{\gambleson[\ps^{\len}]/{#1}}
\newcommand*{\tg}[1][\g]{\tilde{#1}}
\newcommand*{\cg}[1][\g]{{#1}^*}
\newcommand*{\rcf}{\tilde{C}}
\newcommand*{\osin}[1][\os]{#1/\soiv[{\permuts[\len]}]}
\newcommand*{\osind}[1][\os]{#1/\soiv}
\newcommand*{\polson}[1][\simplex]{\mathcal{V}\group{#1}}
\newcommandx{\polsond}[2][1=\simplex, 2=\len]{\mathcal{V}^{#2}\group{#1}}
\newcommand*{\pol}[1][h]{#1}
\newcommand*{\bern}[1][\cts]{B_{#1}}
\newcommand*{\gc}[1][r]{#1}
\newcommand*{\CoMnmap}[1][]{\mathrm{CoM}_{#1}}
\newcommandx{\Mn}[3][1={\len},2={\g},3={\theta}]{\mathrm{M}_{#1}\group{#2\vert#3}}
\newcommand*{\Mnmap}[1][]{\mathrm{M}_{#1}}
\newcommandx{\Mne}[3][1={\len},2={\g},3={\theta}]{\tilde{\mathrm{M}}_{#1}\group{#2\vert#3}}
\newcommand*{\Mnmape}[1][]{\tilde{\mathrm{M}}_{#1}}
\newcommand*{\Mnmapei}[1][]{\tilde{\mathrm{M}}_{#1}^{-1}}
\DeclareMathOperator*{\Posi}{posi}
\DeclareMathOperator*{\Span}{span}
\DeclareMathOperator*{\kernel}{kern}
\DeclareMathOperator*{\rng}{rng}
\newcommand*{\cset}[3][]{\set[#1]{#2:#3}}
\DeclarePairedDelimiter{\group}{(}{)}
\DeclarePairedDelimiter{\sqgroup}{[}{]}
\DeclarePairedDelimiter{\set}{\{}{\}}
\DeclarePairedDelimiter{\abs}{\vert}{\vert}
\begin{document}

\title{Exchangeable choice functions}
\author{\name Arthur Van Camp \email arthur.vancamp@ugent.be\\
\name Gert de Cooman \email gert.decooman@ugent.be\\
\addr IDLab, Ghent University\\
Ghent (Belgium)}
\maketitle
\begin{abstract}
We investigate how to model exchangeability with choice functions.
Exchangeability is a structural assessment on a sequence of uncertain variables.
We show how such assessments are a special indifference assessment, and how that leads to a counterpart of de Finetti's Representation Theorem, both in a finite and a countable context.
\end{abstract}
\begin{keywords}
Exchangeability; choice functions; indifference; sets of desirable gambles; representation.
\end{keywords}

\section{Introduction}
In this paper, we study how to model exchangeability, a structural assessment for uncertainty models that is important for inference purposes, in the framework of choice functions, an interesting approach to modelling uncertainty.
This work builds on the work about exchangeability for lower previsions (see~\cite{cooman2006d}) and exchangeability for sets of desirable gambles (see \cite{cooman2010}).

Choice functions are related to the fundamental problem in decision theory: how to make a choice from within a set of available options.
In their book,~\cite{Neumann1944} provide an axiomatisation of choice based on pairwise comparison between the options.
Later on,~\cite{Rubin1987} generalised that idea and proposed a theory of choice functions based on choice between more than two elements.
One of the aspects in~\citet{Rubin1987}'s theory is that, between any pair of options, the agent either prefers one of them, or he is indifferent between them, so two options can never be incomparable.
However, for instance when the information available does not allow for a complete comparison of the options, the agent may be undecided between two options without being indifferent between them; this will for instance typically be the case when there is no relevant information available at all.
This is one of the motivations for a theory of imprecise probabilities (see~\cite{walley1991}), where incomparability and indifference are distinguished.
\cite{Kadane2004} and~\cite{seidenfeld2010} generalise the axioms in~\cite{Rubin1987} to allow for incomparability.

Exchangeability is a structural assessment on a sequence of uncertain variables.
Loosely speaking, making a judgement of exchangeability means that the order in which the variables are observed is considered irrelevant.
This irrelevancy will be modelled through an indifference assessment.
The first detailed study of exchangeability was~\cite{finetti1937}.
For a brief historical overview, we refer to Ref.~\cite[Sec.~1]{cooman2010}.

In Sec.~\ref{sec:choice_indiff}, we will recall the necessary tools for modelling indifference with choice functions.
Next, in Sec.~\ref{sec:finite exchangeability}, we will derive de Finetti-like Representation Theorems for a finite sequence that is exchangeable.
We will take this one step further in Sec.~\ref{sec:countable exchangeability}, where we consider a countable sequence and derive a representation theorem for such sequences.
Because it will be useful to compare with~\cite{cooman2010}, we will also provide representation theorems for sets of desirable gambles.

\section{Choice functions, desirability and indifference}
\label{sec:choice_indiff}
The material in this section is based on~\cite[Sec.~5]{Vancamp2017}.
Consider a real vector space $\vs$, provided with the vector addition and scalar multiplication.
Elements $\vect$ of $\vs$ are intended as abstract representations of \emph{options} amongst which a subject can express his preferences, by specifying, as we will see below, choice functions.
Mostly, options will be real-valued maps on the possibility space, interpreted as uncertain rewards, and therefore also called \emph{gambles}.
The set of all gambles on the possibility space $\ps$ will be denoted as $\gambleson$.
Given any subset $\os$ of $\vs$, we will define the \emph{linear hull} $\Span\group{\os}\coloneqq\cset{\sum_{k=1}^n\lambda_k\vect[u_k]}{n\in\nats,\lambda_k\in\reals,\vect[u_k]\in\os}\subseteq\vs$ and the \emph{positive hull} $\Posi\group{\os}\coloneqq\cset{\sum_{k=1}^n\lambda_k\vect[u_k]}{n\in\nats,\lambda_k\in\posreals,\vect[u_k]\in\os}\subseteq\Span\group{\os}$, where $\posreals$ is the set of all (strictly) positive real numbers.
A subset $\os$ of $\vs$ is called a \emph{convex cone} if it is closed under positive finite linear combinations, i.e. if $\Posi\group{\os}=\os$.
A convex cone $\mathcal{K}$ is called \emph{proper} if $\mathcal{K}\cap-\mathcal{K}=\set{0}$.
With any proper convex cone $\mathcal{K}\subseteq\vs$, we associate an ordering $\vo[\mathcal{K}]$ on $\vs$ as follows: $\vect\vo[\mathcal{K}]\vect[v]\iff\vect[v]-\vect\in\mathcal{K}$ for any $\vect$ and $\vect[v]$ in $\vs$.
For any $\vect$ and $\vect[v]$ in $\vs$, we write $\vect\svo[\mathcal{K}]\vect[v]$ if $\vect\vo[\mathcal{K}]\vect[v]$ and $\vect\neq\vect[v]$.
We collect all the options $\vect$ for which $0\svo[\mathcal{K}]\vect$ in $\vspos$.
When we work with gambles, then $\vs=\gambleson$ and the ordering will be the standard one $\leq$, given by $\g\leq\g[g]\iff\group{\forall\rval\in\ps}\g\group{\rval}\leq\g[g]\group{\rval}$.
We collect the positive gambles---gambles $\g$ for which $0<\g$---in $\posgambleson$.
Then $\leq$ corresponds to $\vo[\mathcal{K}]$ where we let $\mathcal{K}\coloneqq\posgambleson\cup\set{0}$.

We denote by $\cfdom\group{\vs}$ the set of all non-empty \emph{finite} subsets of $\vs$.
Elements of $\cfdom\group{\vs}$ are the option sets amongst which a subject can choose his preferred options.

A choice function $\cf$ on $\vs$ is a map $\cf\colon\cfdom\to\cfdom\cup\set{\emptyset}\colon\os\mapsto\cf\group{\os}$ such that $\cf\group{\os}\subseteq\os$.
Not every such map represents a rational belief; only the coherent choice functions do.
We call a choice function $\cf$ on $\vs$ \emph{coherent}\footnote{Our rationality axioms are based on those in~\cite{seidenfeld2010}, slightly modified for use with sets of desirable gambles.} if, for all $\os$, $\os[1]$ and $\os[2]$ in $\cfdom\group{\vs}$, all $\vect$ and $\vect[v]$ in $\vs$, and all $\lambda$ in $\posreals$:
\begin{enumerate}[noitemsep,topsep=0pt,label=\upshape C$_{\arabic*}$.,ref=\upshape C$_{\arabic*}$,leftmargin=*]
\item\label{coh cf 1: irreflexivity} $\cf(\os)\neq\emptyset$;
\item\label{coh cf 2: non-triviality} if $\vect\svo\vect[v]$ then $\set{\vect[v]}=\cf\group{\set{\vect,\vect[v]}}$;
\item\label{coh cf 3}
\begin{enumerate}[noitemsep,leftmargin=*,label=\upshape\alph*.,ref=\theenumi\upshape\alph*]
\item\label{coh cf 3a: alpha} if~$\cf(\os[2])\subseteq\os[2]\setminus\os[1]$ and $\os[1]\subseteq\os[2]\subseteq\os$ then~$\cf(\os)\subseteq\os\setminus\os[1]$;
\item\label{coh cf 3b: aizerman} if~$\cf(\os[2])\subseteq\os[1]$ and $\os\subseteq\os[2]\setminus\os[1]$ then~$\cf\group{\os[2]\setminus\os}\subseteq\os[1]$;
\end{enumerate}
\item\label{coh cf 4}
\begin{enumerate}[noitemsep,leftmargin=*,label=\upshape\alph*.,ref=\theenumi\upshape\alph*]
\item\label{coh cf 4a: scaling}
if~$\os[1]\subseteq\cf(\os[2])$ then~$\lambda\os[1]\subseteq\cf\group{\lambda\os[2]}$;
\item\label{coh cf 4b: independence}
if~$\os[1]\subseteq\cf(\os[2])$ then~$\os[1]+\set{\vect}\subseteq\cf\group{\os[2]+\set{\vect}}$.
\end{enumerate}
\end{enumerate}
Consider two isomorphic vector spaces $\vs$ and $\vstoo$, a linear order isomorphism $\bij$ between $\vs$ and $\vstoo$, and a choice function $\cf$ on $\vs$.
Define the choice function $\cf'$ on $\vstoo$ as $\vect\in\cf\group{\os}\iff\bij\group{\vect}\in\cf'\group{\bij\group{\os}}$ for all $\os$ in $\cfdom\group{\vs}$ and $\vect$ in $\os$.
Then, because $\bij$ is a bijection, $\cf$ satisfies Axioms~\ref{coh cf 1: irreflexivity} and~\ref{coh cf 3} if and only if $\cf'$ does; furthermore, because $\bij$ is order preserving, $\cf$ satisfies Axiom~\ref{coh cf 2: non-triviality} if and only if $\cf'$ does; and finally, because $\bij$ is linear, $\cf$ satisfies Axiom~\ref{coh cf 4} if and only if $\cf'$ does: such isomorphisms preserve coherence.

A set of desirable options (or gambles) $\sodv\subseteq\vs$ is essentially the restriction to pairwise comparison of a choice function: $\sodv=\cset{\vect\in\vs\setminus\set{0}}{\set{\vect}=\cf\group{\set{0,\vect}}}$.
We call $\sodv$ coherent if $0\notin\sodv$, $\vspos\subseteq\sodv$, $\vect\in\sodv\then\lambda\vect\in\sodv$, and $\vect,\vect[v]\in\sodv\then\vect+\vect[v]\in\sodv$ for all $\vect$ and $\vect[v]$ in $\vs$ and $\lambda$ in $\posreals$.
$\sodv$ is coherent if the choice function $\cf$ it is based on, is coherent.

Since, as we will see, an exchangeability assessment amounts to a specific indifference assessment, we recall how to model an indifference assessment.
For more information, we refer to~\cite[Sec.~5]{Vancamp2017}.
Next to $\cf\group{\os}$---the options that the agent \emph{strictly} prefers from $\os$---or $\sodv$---the options that he \emph{strictly} prefers to $0$---we consider the options that the agent considers to be \emph{equivalent to the zero option} $\soiv\subseteq\vs$.
We call $\soiv$ coherent if, for all $\vect$ and $\vect[v]$ in $\vs$ and $\lambda$ in $\reals$:
\begin{enumerate}[noitemsep,topsep=0pt,label=\upshape I$_{\arabic*}$.,ref=\upshape I$_{\arabic*}$,leftmargin=*]
\item\label{coh soiv 1: 0 is indifferent}
  $0\in\soiv$;
\item\label{coh soiv 2: positive or negative vectors are not indifferent}
  if $\vect\in\vspos\cup\vsneg$ then $\vect\notin\soiv$;
\item\label{coh soiv 3: scaling is indifferent}
  if $\vect\in\soiv$ then $\lambda\vect\in\soiv$;
\item\label{coh soiv 4: sum is indifferent}
  if $\vect,\vect[v]\in\soiv$ then $\vect+\vect[v]\in\soiv$.
\end{enumerate}
We collect all options that are indifferent to an option $\vect$ in $\vs$ into the \emph{equivalence class} $\ec\coloneq\cset{\vect[v]\in\vs}{\vect[v]-\vect\in\soiv}=\set{\vect}+\soiv$.
The set of all these equivalence classes is the \emph{quotient space} $\qs\coloneq\cset{\ec}{\vect\in\vs}$, being a linear space itself.
We provide it with the natural ordering inherited from $\vs$: $\tilde{\vect}\vo\tilde{\vect[v]}\iff\group{\exists\vect\in\tilde{\vect},\vect[v]\in\tilde{\vect[v]}}\vect\vo\vect[v]$ for all $\tilde{\vect}$ and $\tilde{\vect[v]}$ in $\qs$.

Consider any coherent set of indifferent options $\soiv$.
A choice function $\cf$ is then called \emph{compatible} with $\soiv$ if there is some \emph{representing} choice function $\cf'$ on $\qs$ such that $\cf\group{\os}=\cset{\vect\in\os}{\ec\in\cf'\group{\osind}}$ for all $\os$ in $\cfdom\group{\vs}$.
In that case, $\cf'$ is uniquely determined by $\cf'\group{\osind}=\osind[{\cf\group{\os}}]$ for all $\os$ in $\cfdom\group{\vs}$, and, moreover, $\cf$ is coherent if and only if $\cf'$ is.
Equivalently, we find the following useful characterisation: $\cf$ is compatible with $\soiv$ if and only if $0\in\cf\group{\os}\iff\vect\in\cf\group{\os}$ for all $\vect$ in $\soiv$ and $\os\supseteq\set{0,\vect}$ in $\cfdom\group{\vs}$, which corresponds to the definition of indifference in Ref.~\cite{seidenfeld1988}.

For desirability, compatibility with a coherent set of indifferent options $\soiv$ is defined as follows.
We call a set of desirable gambles $\sodv$ \emph{compatible} with $\soiv$ if $\sodv+\soiv\subseteq\sodv$, and this is equivalent to $\sodv=\bigcup\sodv'$ where $\sodv'\subseteq\qs$ is the \emph{representing} set of desirable options.
In that case, $\sodv'$ is uniquely given by $\sodv'=\osind[\sodv]$, and, moreover, $\sodv$ is coherent if and only if $\sodv'$ is.

\section{Finite exchangeability}
\label{sec:finite exchangeability}
Consider $\len\in\nats$ uncertain variables $\rv[1]$, \dots, $\rv[\len]$ taking values in a non-empty set~$\ps$.
The possibility space of the uncertain sequence $(\rv[1],\dots,\rv[\len])$ is $\ps^{\len}$.

We denote by $\rval=(\rval[1],\dots,\rval[\len])$ an arbitrary element of $\ps^{\len}$.
For any $\len$ in~$\nats$ we call $\permuts[\len]$ the group of all permutations $\permut$ of the index set $\set{1,\dots,\len}$.
There are $\abs{\permuts[\len]}=\len!$ such permutations.
With any such permutation $\permut$, we associate a permutation of $\ps^{\len}$, also denoted by $\permut$, and defined by $\group{\permut\rval}_k\coloneqq\rval[{\permut(k)}]$ for every $k$ in $\set{1,\dots,\len}$, or in other words, $\permut(\rval[1],\dots,\rval[\len])=(\rval[{\permut(1)}],\dots,\rval[{\permut(\len)}])$.
Similarly, we lift $\permut$ to a permutation $\lift$ on $\gambleson[\ps^{\len}]$ by letting $\lift\g=\g\circ\permut$, so $\group{\lift\g}\group{\rval}\coloneqq\g\group{\permut\rval}$ for all $\rval$ in $\ps^{\len}$.
Observe that $\lift$ is a linear permutation of the vector space $\gambleson[\ps^{\len}]$ of all gambles on $\ps^{\len}$.

If a subject assesses that the sequence of variables $\rv$ in $\ps^{\len}$ is exchangeable, this means precisely that he is indifferent between any gamble $\g$ on $\ps^{\len}$ and its permuted variant $\lift\g$, for any $\permut$ in $\permuts[\len]$. 
This leads us to the following proposal for the corresponding set of indifferent gambles:
\begin{equation*}
\soiv[{\permuts[\len]}]
\coloneqq
\Span\cset{\g-\lift\g}{\g\in\gambleson[\ps^{\len}],\permut\in\permuts[\len]}.
\end{equation*}

\begin{definition}\label{def:exchangeable choice function}
A choice function $\cf$ on $\gambleson[\ps^{\len}]$ is called \emph{(finitely) exchangeable} if\/ it is compatible with $\soiv[{\permuts[\len]}]$.
Similarly, a set of desirable gambles $\sodv\subseteq\gambleson[\ps^{\len}]$ is called \emph{(finitely) exchangeable} if\/ it is compatible with $\soiv[{\permuts[\len]}]$.
\end{definition}

Of course, so far, we do not yet know whether this notion of exchangeability is well-defined: indeed, we do not know yet whether $\soiv[{\permuts[\len]}]$ is a \emph{coherent} set of indifferent gambles.
In the next section, we will show that this is indeed the case.

\subsection{Count vectors}
In this section, we will provide the tools necessary to prove that $\soiv[{\permuts[\len]}]$ is a coherent set of indifferent gambles.
In~\cite{cooman2006d} and~\cite{cooman2010}, all the maps we use here are defined.

The \emph{permutation invariant atoms} $\pia\coloneqq\cset{\permut\rval}{\rval\in\ps^{\len}}$, $\rval$ in $\ps^{\len}$ are the smallest permutation invariant subsets of $\ps^{\len}$.
We introduce the \emph{counting map} $\cm\colon\ps^{\len}\to\cvs[\len]\colon\rval\mapsto\cm\group{\rval}$ where $\cm\group{\rval}$ is called the \emph{count vector} of $\rval$.
It is the $\ps$-tuple with components $\cm_z\group{\rval}\coloneqq\abs{\cset{k\in\set{1,\dots,\len}}{x_k=z}}$ for all $z$ in $\ps$, so $\cm_z$ is the number of times that $z$ occurs in the sequence $\rval[1]$, \dots, $\rval[\len]$.
The range of $\cm$---the set $\cvs[\len]$---is called the set of possible count vectors and is given by
\begin{equation*}
\cvs[\len]
\coloneqq
\cset[\bigg]{\cts\in\nonnegints^\ps}{\sum_{\rval\in\ps}\cts_{\rval}=\len}.
\end{equation*}
Remark that applying any permutation to $\rval$ leaves its result under the counting map unchanged:
\begin{equation*}
\cm\group{\rval}=\cm\group{\permut\rval}
\text{ for al $\rval$ in $\ps^{\len}$ and $\permut$ in $\permuts[\len]$}.
\end{equation*}
For any $\rval$ in $\ps^{\len}$, if $\cts=\cm\group{\rval}$ then $\pia=\cset{y\in\ps^{\len}}{\cm\group{y}=\cts}$, so the permutation invariant atom $\pia$ is completely determined by the count vector $\cts$ of all its elements, and is therefore also denoted by $\pia[{\cm\group{\rval}}]=\pia[\cts]$.
Remark that $\cset{\pia[\cts]}{\cts\in\cvs[\len]}$ partitions $\ps^{\len}$ into disjoint parts with constant count vectors, and that $\abs{\pia[\cts]}=\binom{\len}{\cts}\coloneqq\frac{N!}{\prod_{z\in\ps}m_z!}$.

In order to extend the idea of the count vectors for use with gambles, let us define the \emph{set of all permutation invariant gambles} as $\pigambleson[\ps^{\len}]\coloneqq\cset{\g\in\gambleson[\ps^{\len}]}{\group{\forall\permut\in\permuts[\len]}\lift\g=\g}\subseteq\gambleson[\ps^{\len}]$, and a special transformation $\ex$ of the linear space $\gambleson[\ps^{\len}]$
\begin{equation}\label{eq:def inv}
\ex
\colon
\gambleson[\ps^{\len}]\to\gambleson[\ps^{\len}]
\colon
\g\mapsto\ex\group{\g}\coloneqq\frac{1}{N!}\sum_{\permut\in\permuts[\len]}\lift\g,
\end{equation}
which, as we will see, is closely linked with $\pigambleson[\ps^{\len}]$ (see~\cite{cooman2010,Vancamp2017}).

\begin{proposition}\label{prop:ex properties}
$\ex$ is a linear transformation of $\gambleson[\ps^{\len}]$, and
\begin{enumerate}[label=\upshape(\roman*),leftmargin=*,noitemsep,topsep=0pt]
\item\label{it:ex property:invar} $\ex\circ\lift=\ex=\lift\circ\ex$ for all $\permut$ in $\permuts$;
\item\label{it:ex property:proj} $\ex\circ\ex=\ex$;
\item\label{it:ex property:kernel} $\kernel\group{\ex}=\soiv[{\permuts[\len]}]$;
\item\label{it:ex property:range} $\rng\group{\ex}=\pigambleson[\ps^{\len}]$.
\end{enumerate}
\end{proposition}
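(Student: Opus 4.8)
The plan is to treat $\ex$ as the symmetrisation (group-averaging) operator for the action of $\permuts[\len]$ on $\gambleson[\ps^{\len}]$ through the linear maps $\lift$, and to read off all four properties from the bijectivity of left- and right-translation in $\permuts[\len]$. Linearity of $\ex$ is immediate, since each $\lift$ is linear in its argument---by definition $\group{\lift\g}\group{\rval}=\g\group{\permut\rval}$ is a fixed reindexing followed by evaluation---and the finite average $\frac{1}{\len!}\sum_{\permut\in\permuts[\len]}\lift$ of linear maps is again linear.

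For~\ref{it:ex property:invar} I would first record the composition rule $\lift\circ\lift[\permuttoo]=\group{\permut\permuttoo}^t$, which follows directly from $\lift\g=\g\circ\permut$. Then, for any $\permuttoo$ in $\permuts[\len]$,
\begin{equation*}
\lift[\permuttoo]\group{\ex\group{\g}}
=\frac{1}{\len!}\sum_{\permut\in\permuts[\len]}\lift[\permuttoo]\group{\lift\g}
=\frac{1}{\len!}\sum_{\permut\in\permuts[\len]}\group{\permut\permuttoo}^t\g
=\ex\group{\g},
\end{equation*}
where the last equality holds because $\permut\mapsto\permut\permuttoo$ is a bijection of $\permuts[\len]$, so the sum is merely reindexed; the companion identity $\ex\circ\lift[\permuttoo]=\ex$ is obtained the same way, using instead that $\permut\mapsto\permuttoo\permut$ is a bijection. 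Property~\ref{it:ex property:proj} is then immediate by applying $\ex\circ\lift=\ex$ termwise: $\ex\group{\ex\group{\g}}=\frac{1}{\len!}\sum_{\permut\in\permuts[\len]}\ex\group{\lift\g}=\frac{1}{\len!}\sum_{\permut\in\permuts[\len]}\ex\group{\g}=\ex\group{\g}$.

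For~\ref{it:ex property:kernel}, the inclusion $\soiv[{\permuts[\len]}]\subseteq\kernel\group{\ex}$ holds because $\ex$ annihilates each generator $\g-\lift\g$ by~\ref{it:ex property:invar} and $\kernel\group{\ex}$ is a linear subspace; the converse is the one step that needs a trick: if $\ex\group{\g}=0$, then $\g=\g-\ex\group{\g}=\frac{1}{\len!}\sum_{\permut\in\permuts[\len]}\group{\g-\lift\g}$ exhibits $\g$ as a linear combination of generators of $\soiv[{\permuts[\len]}]$, so $\g\in\soiv[{\permuts[\len]}]$. Finally, for~\ref{it:ex property:range}: $\rng\group{\ex}\subseteq\pigambleson[\ps^{\len}]$ since $\lift[\permuttoo]\group{\ex\group{\g}}=\ex\group{\g}$ for every $\permuttoo$ by~\ref{it:ex property:invar}, which is exactly permutation invariance; and conversely every $\g\in\pigambleson[\ps^{\len}]$ has $\lift\g=\g$ for all $\permut$, whence $\ex\group{\g}=\frac{1}{\len!}\sum_{\permut\in\permuts[\len]}\g=\g\in\rng\group{\ex}$. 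I expect no genuine obstacle here; the only points requiring care are getting the direction of the group composition right in~\ref{it:ex property:invar} (hence invoking the correct translation bijection) and spotting the rewriting $\g=\g-\ex\group{\g}$ that drives the nontrivial half of~\ref{it:ex property:kernel}.
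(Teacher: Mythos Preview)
The paper does not actually supply a proof of this proposition; it cites \cite{cooman2010,Vancamp2017} for the result and uses it as a tool in later arguments. Your proof is the standard group-averaging argument and is correct in substance.

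One small slip worth fixing: the composition rule you state in words, $\lift\circ\lift[\permuttoo]=(\permut\permuttoo)^t$, is reversed---from $\lift\g=\g\circ\permut$ one obtains $\lift\circ\lift[\permuttoo]=(\permuttoo\permut)^t$ (the action is contravariant). Your displayed computation, however, actually uses $\lift[\permuttoo]\circ\lift=(\permut\permuttoo)^t$, which \emph{is} correct, and in any case either left or right translation is a bijection of $\permuts[\len]$, so the reindexing goes through regardless. You even flag this as the delicate point; just make the stated rule match the one you apply. Everything else---idempotence from~\ref{it:ex property:invar}, the rewriting $\g=\g-\ex\group{\g}=\frac{1}{\len!}\sum_{\permut}(\g-\lift\g)$ for the nontrivial kernel inclusion, and the two range inclusions---is clean and is exactly how this is proved in the cited references.
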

\noindent
So we see that $\ex$ is a linear projection operator that maps any gamble to a permutation invariant counterpart.

As shown by~\citet{cooman2010}, the linear projection operator $\ex$ renders a gamble insensitive to permutation by replacing it with the uniform average of all its permutations.
As a result, it assumes the same value for all gambles that can be related to each other through some permutation:
\begin{equation*}
\ex\group{\g}=\ex\group{\g[g]}
\text{ if $\g=\lift\g[g]$ for some $\permut$ in $\permuts[\len]$, for all $\g$ and $\g[g]$ in $\gambleson[\ps^{\len}]$.}
\end{equation*}
Furthermore, for any $\g$ in $\gambleson[\ps^{\len}]$, its transformation $\ex\group{\g}$ is permutation invariant and therefore constant on the permutation invariant atoms $\pia[\cts]$: $\group{\ex\group{\g}}\group{\rval}=\group{\ex\group{\g}}\group{y}$ if $\pia=\pia[y]$, for all $\rval$ and $y$ in $\ps^{\len}$.
We can use the properties of $\ex$ to prove that $\soiv[{\permuts[\len]}]$ is suitable for the definition of exchangeability.

\begin{proposition}\label{prop:soiv finite exchangeability is coherent}
Consider any $\len$ in $\nats$.
Then $\soiv[{\permuts[\len]}]$ is a coherent set of indifferent gambles.
\end{proposition}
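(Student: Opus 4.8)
The plan is to verify directly the four defining conditions~\ref{coh soiv 1: 0 is indifferent}--\ref{coh soiv 4: sum is indifferent} for $\soiv[{\permuts[\len]}]$, leaning on Proposition~\ref{prop:ex properties}, and in particular on the facts that $\soiv[{\permuts[\len]}]=\kernel\group{\ex}$ and that $\ex$ is a linear projection. Since $\ps$ is non-empty, $\ps^{\len}$ is non-empty and $\gambleson[\ps^{\len}]$ is a genuine real vector space equipped with the pointwise order $\leq$, so the axioms make sense with $\vs=\gambleson[\ps^{\len}]$ and $\vspos=\posgambleson[\ps^{\len}]$.

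Conditions~\ref{coh soiv 1: 0 is indifferent},~\ref{coh soiv 3: scaling is indifferent} and~\ref{coh soiv 4: sum is indifferent} come essentially for free. By Proposition~\ref{prop:ex properties}\ref{it:ex property:kernel}, $\soiv[{\permuts[\len]}]$ equals the kernel of the linear transformation $\ex$ of $\gambleson[\ps^{\len}]$, hence it is a linear subspace: it contains $0$, is closed under multiplication by real scalars, and is closed under sums. (The same three facts are also immediate from the defining expression $\soiv[{\permuts[\len]}]=\Span\cset{\g-\lift\g}{\g\in\gambleson[\ps^{\len}],\permut\in\permuts[\len]}$, a linear hull being a subspace by construction.)

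The condition with actual content is~\ref{coh soiv 2: positive or negative vectors are not indifferent}: no nonzero sign-definite gamble may lie in $\soiv[{\permuts[\len]}]$. Suppose first that $\g\in\posgambleson[\ps^{\len}]$, i.e.\ $\g\geq0$ and $\g\neq0$, and pick $\rval$ in $\ps^{\len}$ with $\g\group{\rval}>0$. Each $\lift\g=\g\circ\permut$ is again nonnegative because $\g$ is, so every term in the average $\ex\group{\g}\group{\rval}=\frac{1}{\len!}\sum_{\permut\in\permuts[\len]}\g\group{\permut\rval}$ is nonnegative, while the term for $\permut=\id$ equals $\g\group{\rval}>0$; hence $\ex\group{\g}\group{\rval}>0$, so $\ex\group{\g}\neq0$ and therefore $\g\notin\kernel\group{\ex}=\soiv[{\permuts[\len]}]$. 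If instead $\g$ is a negative gamble, then $-\g\in\posgambleson[\ps^{\len}]$; since $\soiv[{\permuts[\len]}]$ is a linear subspace, $\g\in\soiv[{\permuts[\len]}]$ would force $-\g\in\soiv[{\permuts[\len]}]$, which the first case excludes, so $\g\notin\soiv[{\permuts[\len]}]$. This establishes~\ref{coh soiv 2: positive or negative vectors are not indifferent}, and hence the proposition.

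\textbf{Where the work is.} The only step that is not pure bookkeeping about linear subspaces is~\ref{coh soiv 2: positive or negative vectors are not indifferent}, and the key to it is the observation --- already reflected in Proposition~\ref{prop:ex properties} through $\rng\group{\ex}=\pigambleson[\ps^{\len}]$ --- that $\ex$ is an average of the order-preserving maps $\lift$, so it preserves nonnegativity and retains the identity contribution, which prevents it from annihilating any nonzero nonnegative gamble.
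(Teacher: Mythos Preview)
Your proof is correct and follows essentially the same approach as the paper's: Axioms~\ref{coh soiv 1: 0 is indifferent},~\ref{coh soiv 3: scaling is indifferent},~\ref{coh soiv 4: sum is indifferent} are dispatched by noting that $\soiv[{\permuts[\len]}]$ is a linear subspace (as a span, equivalently as $\kernel\group{\ex}$), and Axiom~\ref{coh soiv 2: positive or negative vectors are not indifferent} is handled by showing that a nonzero sign-definite gamble cannot lie in $\kernel\group{\ex}$ because the average of its permuted copies retains the sign. Your treatment of~\ref{coh soiv 2: positive or negative vectors are not indifferent} is slightly more explicit (evaluating at a witness point and reducing the negative case to the positive one), but this is the same argument as the paper's.
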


Since $\soiv[{\permuts[\len]}]$ is coherent, exchangeability is well-defined, and by the discussion in Sec.~\ref{sec:choice_indiff}, the representing choice function $\cf'$ is defined on $\qsg$, and, similarly, the representing set of desirable gambles $\sodv'\subseteq\qsg$.
So we shall focus on the quotient space and its elements, exchangeable equivalent classes of gambles.

But before we do that, it will pay to further explore the notions we have introduced thus far.

Consider any $\g$ in $\gambleson[\ps^{\len}]$.
What is the constant value that $\ex\group{\g}$ assumes on a permutation invariant atom $\pia[\cts]$?
To answer this question, consider any $\rval$ in $\pia[\cts]$, then $\group{\ex\group{\g}}\group{\rval}=\frac{1}{n!}\sum_{\permut\in\permuts[\len]}\g\group{\permut\rval}=\frac{1}{n!}\frac{\abs{\permuts[\len]}}{\abs{\pia[\cts]}}\sum_{y\in\cset{\permut\rval}{\permut\in\permuts[\len]}}\g\group{y}=\frac{1}{\binom{\len}{\cts}}\sum_{y\in\pia}\g\group{y}=\frac{1}{\binom{\len}{\cts}}\sum_{y\in\pia[\cts]}\g\group{y}$,
whence 
\begin{equation}\label{eq:relation ex and Hy}
\ex=\sum_{\cts\in\cvs[\len]}\Hy\ind[{\pia[\cts]}],
\end{equation}
where $\Hy$ is the linear expectation operator associated with the uniform distribution on the invariant atom $\pia[\cts]$:
\begin{equation}\label{eq:def Hy}
\Hy[\len][\g][\cts]
\coloneqq
\frac{1}{\binom{\len}{\cts}}\sum_{y\in\pia[\cts]}\g\group{y}
\text{ for all $\g$ in $\gambleson[\ps^{\len}]$ and $\cts$ in $\cvs[\len]$.}
\end{equation}
It characterises a (multivariate) hyper-geometric distribution (see Ref.~\cite{johnson1997}), associated with random sampling without replacement from an urn with $\len$ balls of types $\ps$, whose composition is characterised by the count vector $\cts$.

The result of applying a gamble $\g$ on $\ps^{\len}$ to the map
\begin{equation}\label{eq:def Hymap}
\Hymap[\len]
\colon
\gambleson[\ps^{\len}]\to\gambleson[{\cvs[\len]}]
\colon
\g\mapsto\Hymap[\len]\group{\g}\coloneqq\Hy[\len][\g][\cdot]
\end{equation}
is the gamble $\Hymap[\len]\group{\g}$ on $\cvs[\len]$ that assumes  the value $\frac{1}{\binom{\len}{\cts}}\sum_{y\in\pia[\cts]}\g\group{y}$ for every $\cts$ in $\cvs[\len]$.

\subsection{Exchangeable equivalent classes of gambles}
\label{subsec:exchangeable ec of gambles}
We already know that exchangeable choice functions are represented by choice functions on the quotient space $\qsg$, and similar for sets of desirable gambles.
In the quest for an elegant representation theorem, we thus need to focus on the quotient space $\qsg$ and its elements, which are exchangeable equivalent classes of gambles.

In this section we investigate how the representation of permutation invariant gambles helps us find a representation for exchangeable choice functions.
To that end, the representation will use equivalence classes $\ec[\g]\coloneqq\set{\g}+\soiv[{\permuts[\len]}]$ of gambles, for any $\g$ in $\gambleson[\ps^{\len}]$.
Recall that the quotient space $\qsg\coloneqq\cset{\ec[\g]}{\g\in\gambleson[\ps^{\len}]}$ is a linear space itself, with additive identity $\ec[0]=\soiv[{\permuts[\len]}]$, and therefore any element $\tg$ of $\qsg$ is invariant under addition of $\soiv[{\permuts[\len]}]$: $\tg+\soiv[{\permuts[\len]}]=\tg$.
Elements of $\qsg$ will be generically denoted as $\tg$ or $\tg[g]$.


\begin{proposition}\label{prop:characterisation qsg by Hy}
Consider any $\g$ and $\g[g]$ in $\gambleson[\ps^{\len}]$.
Then $\ec[\g]=\ec[{\g[g]}]$ if and only if\/ $\Hymap[\len]\group{\g}=\Hymap[\len]\group{\g[g]}$
\end{proposition}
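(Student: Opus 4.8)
The plan is to prove both implications using the characterisation of the kernel of $\ex$ from Proposition~\ref{prop:ex properties}\ref{it:ex property:kernel}, namely $\kernel\group{\ex}=\soiv[{\permuts[\len]}]$, together with the relation between $\ex$ and $\Hymap[\len]$ recorded in Equation~\eqref{eq:relation ex and Hy}. The key observation is that $\ec[\g]=\ec[{\g[g]}]$ holds exactly when $\g-\g[g]\in\soiv[{\permuts[\len]}]$, which by Proposition~\ref{prop:ex properties}\ref{it:ex property:kernel} is equivalent to $\ex\group{\g-\g[g]}=0$, and since $\ex$ is linear, to $\ex\group{\g}=\ex\group{\g[g]}$. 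So the whole statement reduces to showing that $\ex\group{\g}=\ex\group{\g[g]}$ if and only if $\Hymap[\len]\group{\g}=\Hymap[\len]\group{\g[g]}$.

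For that reduced equivalence, I would exploit the decomposition $\ex=\sum_{\cts\in\cvs[\len]}\Hy\ind[{\pia[\cts]}]$ from Equation~\eqref{eq:relation ex and Hy}. For the easy direction, if $\Hymap[\len]\group{\g}=\Hymap[\len]\group{\g[g]}$ then $\Hy[\len][\g][\cts]=\Hy[\len][{\g[g]}][\cts]$ for every $\cts$ in $\cvs[\len]$, so plugging these equal coefficients into the sum gives $\ex\group{\g}=\ex\group{\g[g]}$ immediately. For the converse, suppose $\ex\group{\g}=\ex\group{\g[g]}$. Since $\cset{\pia[\cts]}{\cts\in\cvs[\len]}$ partitions $\ps^{\len}$, the indicators $\ind[{\pia[\cts]}]$ have pairwise disjoint supports; evaluating the equality $\sum_{\cts}\Hy[\len][\g][\cts]\ind[{\pia[\cts]}]=\sum_{\cts}\Hy[\len][{\g[g]}][\cts]\ind[{\pia[\cts]}]$ at any point of $\pia[\cts]$ isolates the single term with index $\cts$, yielding $\Hy[\len][\g][\cts]=\Hy[\len][{\g[g]}][\cts]$. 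As $\cts$ ranges over $\cvs[\len]$ this says precisely $\Hymap[\len]\group{\g}=\Hymap[\len]\group{\g[g]}$, by the definition of $\Hymap[\len]$ in Equation~\eqref{eq:def Hymap}.

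There is no serious obstacle here; the argument is essentially a bookkeeping exercise once the right earlier results are assembled. The one point that deserves a careful sentence is the step $\ec[\g]=\ec[{\g[g]}]\iff\g-\g[g]\in\soiv[{\permuts[\len]}]$: this follows because $\ec[\g]=\set{\g}+\soiv[{\permuts[\len]}]$ is a coset of the linear subspace $\soiv[{\permuts[\len]}]$ (coherent, hence in particular a linear subspace, by Proposition~\ref{prop:soiv finite exchangeability is coherent}), and two cosets of a subspace coincide iff the difference of their representatives lies in the subspace. Everything else is linearity of $\ex$ and the disjointness of the invariant atoms.
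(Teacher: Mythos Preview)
Your proposal is correct and follows essentially the same route as the paper: reduce $\ec[\g]=\ec[{\g[g]}]$ to $\g-\g[g]\in\soiv[{\permuts[\len]}]$, invoke $\kernel\group{\ex}=\soiv[{\permuts[\len]}]$ and linearity to get $\ex\group{\g}=\ex\group{\g[g]}$, and then use the decomposition of Equation~\eqref{eq:relation ex and Hy} together with the fact that the $\pia[\cts]$ partition $\ps^{\len}$ to pass to $\Hymap[\len]\group{\g}=\Hymap[\len]\group{\g[g]}$. The only organisational difference is that the paper isolates the last equivalence $\ex\group{\g}=\ex\group{\g[g]}\iff\Hymap[\len]\group{\g}=\Hymap[\len]\group{\g[g]}$ as a separate lemma, whereas you prove it inline.
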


Therefore, it makes sense to introduce the map $\Hymape[\len]$:
\begin{equation}\label{eq:def Hymap on qsg}
\Hymape[\len]
\colon\qsg\to\gambleson[{\cvs[\len]}]
\colon\tg\mapsto\Hymap[\len]\group{\g}\text{ for any $\g$ in $\tg$.}
\end{equation}
Then Proposition~\ref{prop:characterisation qsg by Hy} guarantees that elements of $\qsg$ are characterised using $\Hymape[\len]$, in the sense that $\tg=\cset{\g\in\gambleson[\ps^n]}{\Hymap[\len]\group{\g}=\Hymape[\len]\group{\tg}}$ for all $\tg$ in $\qsg$.

The map $\Hymape[\len]$ takes as input an equivalence class of gambles, and maps it to some representing gamble on the count vectors.
It will be useful later on to consider some converse map $\Hymapei[\len]$:
\begin{equation}\label{eq:Hymap inverse}
\Hymapei[\len]
\colon
\gambleson[{\cvs[\len]}]\to\qsg
\colon
\g\mapsto\ec[{\sum_{\cts\in\cvs[\len]}\g\group{\cts}\ind[{\pia[\cts]}]}].
\end{equation}

\begin{proposition}\label{prop:Hymap inverse}
The maps $\Hymape[\len]$ as defined in Eq.~\eqref{eq:def Hymap on qsg} and $\Hymapei[\len]$ as defined in Eq.~\eqref{eq:Hymap inverse} are each other's inverses.
\end{proposition}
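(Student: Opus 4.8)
The plan is to check directly that the two composites $\Hymape[\len]\circ\Hymapei[\len]$ and $\Hymapei[\len]\circ\Hymape[\len]$ are the identity maps on $\gambleson[{\cvs[\len]}]$ and $\qsg$, respectively. I will rely on two facts established earlier: the decomposition $\ex=\sum_{\cts\in\cvs[\len]}\Hy\ind[{\pia[\cts]}]$ from Eq.~\eqref{eq:relation ex and Hy}, together with the fact that the permutation invariant atoms $\pia[\cts]$, $\cts$ in $\cvs[\len]$, partition $\ps^{\len}$ and satisfy $\abs{\pia[\cts]}=\binom{\len}{\cts}$; and the identity $\kernel\group{\ex}=\soiv[{\permuts[\len]}]$ from Proposition~\ref{prop:ex properties}\ref{it:ex property:kernel}. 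Recall also that $\Hymape[\len]$ is well defined by Proposition~\ref{prop:characterisation qsg by Hy}, while $\Hymapei[\len]$ is well defined straight from Eq.~\eqref{eq:Hymap inverse}.

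For the composite $\Hymape[\len]\circ\Hymapei[\len]$, fix $\g$ in $\gambleson[{\cvs[\len]}]$ and put $\g[h]\coloneqq\sum_{\cts\in\cvs[\len]}\g\group{\cts}\ind[{\pia[\cts]}]\in\gambleson[\ps^{\len}]$, so that $\Hymapei[\len]\group{\g}=\ec[{\g[h]}]$. Since the atoms $\pia[\cts]$ are pairwise disjoint, $\g[h]\group{y}=\g\group{\cts'}$ for every $\cts'$ in $\cvs[\len]$ and every $y$ in $\pia[{\cts'}]$, whence
\begin{equation*}
\Hy[\len][{\g[h]}][{\cts'}]=\frac{1}{\binom{\len}{\cts'}}\sum_{y\in\pia[{\cts'}]}\g[h]\group{y}=\frac{1}{\binom{\len}{\cts'}}\binom{\len}{\cts'}\g\group{\cts'}=\g\group{\cts'}.
\end{equation*}
Hence $\Hymap[\len]\group{\g[h]}=\g$, and by the definition~\eqref{eq:def Hymap on qsg} of $\Hymape[\len]$ we obtain $\Hymape[\len]\group{\Hymapei[\len]\group{\g}}=\Hymape[\len]\group{\ec[{\g[h]}]}=\Hymap[\len]\group{\g[h]}=\g$.

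For the composite $\Hymapei[\len]\circ\Hymape[\len]$, fix $\tg$ in $\qsg$ and pick any $\g$ in $\tg$, so that $\Hymape[\len]\group{\tg}=\Hymap[\len]\group{\g}=\Hy[\len][\g][\cdot]$. Applying $\Hymapei[\len]$ and then using Eq.~\eqref{eq:relation ex and Hy},
\begin{equation*}
\Hymapei[\len]\group{\Hymape[\len]\group{\tg}}=\ec[{\sum_{\cts\in\cvs[\len]}\Hy[\len][\g][\cts]\ind[{\pia[\cts]}]}]=\ec[{\ex\group{\g}}].
\end{equation*}
It remains to check $\ec[{\ex\group{\g}}]=\ec[\g]$, i.e.\ that $\g-\ex\group{\g}\in\soiv[{\permuts[\len]}]$. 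Since $\ex$ is linear and idempotent (Proposition~\ref{prop:ex properties}\ref{it:ex property:proj}), $\ex\group{\g-\ex\group{\g}}=\ex\group{\g}-\ex\group{\g}=0$, so $\g-\ex\group{\g}\in\kernel\group{\ex}=\soiv[{\permuts[\len]}]$ by Proposition~\ref{prop:ex properties}\ref{it:ex property:kernel}. Therefore $\ec[{\ex\group{\g}}]=\ec[\g]=\tg$, which completes the argument.

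I do not anticipate any real obstacle: once Eq.~\eqref{eq:relation ex and Hy} and the kernel identity are available the proof is essentially bookkeeping. The only point deserving care is the second composite, where one should not expect $\Hymapei[\len]$ to reproduce the particular representative $\g$ one started with; it returns the permutation invariant representative $\ex\group{\g}$ of the class, and it is exactly the identity $\kernel\group{\ex}=\soiv[{\permuts[\len]}]$ that guarantees this representative again lies in $\tg$.
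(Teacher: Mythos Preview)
Your proof is correct and follows essentially the same approach as the paper: both verify the two composites directly, relying on the partition of $\ps^{\len}$ into the atoms $\pia[\cts]$ for the composite $\Hymape[\len]\circ\Hymapei[\len]$, and on the identities $\ex=\sum_{\cts}\Hy\ind[{\pia[\cts]}]$ and $\kernel\group{\ex}=\soiv[{\permuts[\len]}]$ together with idempotence of $\ex$ for the other composite. The only cosmetic difference is that the paper first passes to the permutation invariant representative $\ex\group{\g}$ and then computes, whereas you invoke Eq.~\eqref{eq:relation ex and Hy} directly and use $\g-\ex\group{\g}\in\kernel\group{\ex}$; the ingredients are identical.
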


The importance of Prop.~\ref{prop:Hymap inverse} lies in the fact that now, $\Hymape[\len]$ is a bijection between $\qsg$ and $\gambleson[{\cvs[{\len}]}]$, and therefore, exchangeable equivalence classes of gambles are in a one-to-one correspondence with gambles on count vectors.

\begin{center}
\footnotesize
\begin{tikzpicture}[scale=0.8]
\node (1) at (0,0) {$\gambleson[\ps^{\len}]$};
\node (2) at (4,0) {$\gambleson[{\cvs[{\len}]}]$};
\node (3) at (0,-1.5) {$\qsg$};
\draw[->] (1) -- node[above] {$\Hymap[\len]$} (2);
\draw[->] (1) -- node[left] {$\ec[\cdot]$} (3);
\draw[<->] (2) -- node[below] {$\Hymape[\len]$} (3);
\end{tikzpicture}
\end{center}
The commuting diagram shows the surjections $\ec[\cdot]\colon\gambleson[\ps^{\len}]\to\qsg\colon\g\mapsto\ec[\g]$ and $\Hymap[\len]$ (indicated with a single arrow), and the bijection $\Hymape[\len]$ (indicated with a double arrow).
Since the representing choice function $\cftoo$ is defined from $\cf$ through $\ec[\cdot]$---working point-wise on sets---this already suggests that $\cftoo$ can be transformed into a choice function on $\gambleson[{\cvs[{\len}]}]$.
To prove that they preserve coherence, there is only one missing link: the ordering between $\qsg$ and $\gambleson[{\cvs[{\len}]}]$ should be preserved.

Therefore, to define the ordering $\vo$ on $\qsg$, as usual, we let $\vo$ be inherited by the ordering~$\leq$ on $\gambleson[\ps^{\len}]$:
\begin{equation}\label{eq:vector ordering:lexicographic}
\tg\vo\tg[g]
\iff
\group{\exists\g\in\tg,\exists\g[g]\in\tg[g]}
\g\leq\g[g]
\end{equation}
for all $\tg$ and $\tg[g]$ in $\qsg$, turning $\qsg$ into an ordered linear space.
It turns out that this vector ordering on $\qsg$ can be represented elegantly using $\Hymape[\len]$:
\begin{proposition}\label{prop:vector ordering:Hymap}
Consider any $\tg$ and $\tg[g]$ in $\qsg$, then $\tg\vo\tg[g]$ if and only if\/ $\Hymape[\len]\group{\tg}\leq\Hymape[\len]\group{\tg[g]}$.
\end{proposition}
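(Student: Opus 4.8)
The plan is to prove both implications directly, using the explicit formula~\eqref{eq:relation ex and Hy} that expresses $\ex$ in terms of the operators $\Hy[\len][\cdot][\cts]$ together with Proposition~\ref{prop:ex properties}\ref{it:ex property:kernel}, which identifies $\kernel\group{\ex}$ with $\soiv[{\permuts[\len]}]$. The key observation is that for any $\g$ in $\gambleson[\ps^{\len}]$ one has the pointwise identity $\ex\group{\g}=\sum_{\cts\in\cvs[\len]}\group{\Hymap[\len]\group{\g}}\group{\cts}\ind[{\pia[\cts]}]$, so that the gamble $\ex\group{\g}$ on $\ps^{\len}$ and the gamble $\Hymap[\len]\group{\g}$ on $\cvs[\len]$ carry exactly the same information: $\ex\group{\g}\leq\ex\group{\g[g]}$ on $\ps^{\len}$ if and only if $\Hymap[\len]\group{\g}\leq\Hymap[\len]\group{\g[g]}$ on $\cvs[\len]$, because the $\pia[\cts]$ partition $\ps^{\len}$ and each is non-empty. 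So the real content is to relate $\tg\vo\tg[g]$ to $\ex\group{\g}\leq\ex\group{\g[g]}$ for representatives $\g\in\tg$, $\g[g]\in\tg[g]$.

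First I would prove the ``if'' direction. Suppose $\Hymape[\len]\group{\tg}\leq\Hymape[\len]\group{\tg[g]}$. Pick any $\g$ in $\tg$ and $\g[g]$ in $\tg[g]$; then $\Hymap[\len]\group{\g}=\Hymape[\len]\group{\tg}\leq\Hymape[\len]\group{\tg[g]}=\Hymap[\len]\group{\g[g]}$, hence by the partition argument $\ex\group{\g}\leq\ex\group{\g[g]}$. Now $\ex\group{\g}\in\tg$: indeed $\g-\ex\group{\g}\in\kernel\group{\ex}=\soiv[{\permuts[\len]}]$ by Proposition~\ref{prop:ex properties}\ref{it:ex property:proj} and~\ref{it:ex property:kernel}, so $\ec[{\ex\group{\g}}]=\ec[\g]=\tg$; likewise $\ex\group{\g[g]}\in\tg[g]$. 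Taking these two particular representatives witnesses $\tg\vo\tg[g]$ via~\eqref{eq:vector ordering:lexicographic}.

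For the ``only if'' direction, suppose $\tg\vo\tg[g]$, so there exist $\g\in\tg$ and $\g[g]\in\tg[g]$ with $\g\leq\g[g]$. The subtlety here is that $\ex$ need not preserve $\leq$ coordinatewise for an arbitrary pair of gambles — but it does when one dominates the other: if $\g\leq\g[g]$ then $\lift\g\leq\lift\g[g]$ for every $\permut$ (since $\lift$ just permutes coordinates), and averaging over $\permuts[\len]$ gives $\ex\group{\g}\leq\ex\group{\g[g]}$ by~\eqref{eq:def inv}. Then the partition argument again yields $\Hymap[\len]\group{\g}\leq\Hymap[\len]\group{\g[g]}$, and since $\Hymap[\len]\group{\g}=\Hymape[\len]\group{\tg}$ and $\Hymap[\len]\group{\g[g]}=\Hymape[\len]\group{\tg[g]}$ by~\eqref{eq:def Hymap on qsg}, we conclude $\Hymape[\len]\group{\tg}\leq\Hymape[\len]\group{\tg[g]}$.

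The only mild obstacle is the monotonicity step $\g\leq\g[g]\Rightarrow\ex\group{\g}\leq\ex\group{\g[g]}$ in the ``only if'' direction: one must resist the temptation to claim $\ex$ is monotone in general and instead use that each $\lift$ is a coordinate permutation, hence monotone, and that a convex combination of monotone maps is monotone. Everything else is bookkeeping with the partition $\set{\pia[\cts]:\cts\in\cvs[\len]}$ and the identification $\kernel\group{\ex}=\soiv[{\permuts[\len]}]$.
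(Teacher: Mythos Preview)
Your proof is correct and follows essentially the same route as the paper's: for sufficiency you use $\ex\group{\g}\in\tg$ and $\ex\group{\g[g]}\in\tg[g]$ as witnessing representatives (exactly as the paper does with $f'\coloneqq\ex\group{\g}$, $g'\coloneqq\ex\group{\g[g]}$), and for necessity you use monotonicity of the averaging operator, which the paper does as well, only computing directly with the formula $\Hy[\len][\g][\cts]=\frac{1}{\binom{\len}{\cts}}\sum_{y\in\pia[\cts]}\g\group{y}$ rather than passing through $\ex$ first.

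One minor remark: your caveat that ``$\ex$ need not preserve $\leq$ coordinatewise for an arbitrary pair of gambles'' is misplaced. The operator $\ex$ \emph{is} monotone in full generality: it is a convex combination of the maps $\lift$, each of which is monotone, so $\g\leq\g[g]\Rightarrow\ex\group{\g}\leq\ex\group{\g[g]}$ holds without further hypothesis. There is no subtlety to flag here; the step is immediate.
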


Props.~\ref{prop:Hymap inverse} and~\ref{prop:vector ordering:Hymap} imply that $\Hymap[\len]$ is a linear order isomorphism.

\subsection{A representation theorem}
Now that we have found a linear order isomorphism $\Hymape[\len]$ between $\qsg$ and $\gambleson[{\cvs[\len]}]$,  we are ready to represent coherent and exchangeable choice functions.

\begin{theorem}[Finite Representation]\label{theorem:finite exchangeability representation}
Consider any choice function $\cf$ on $\gambleson[\ps^{\len}]$.
Then $\cf$ is exchangeable if and only if there is a unique representing choice function $\rcf$ on $\gambleson[{\cvs[\len]}]$ such that
\begin{equation*}
\cf\group{\os}
=
\cset{\g\in\os}{\Hymap[\len]\group{\g}\in\rcf\group{\Hymap[\len]\group{\os}}}
\text{ for all $\os$ in $\cfdom\group{\gambleson[\ps^{\len}]}$.}
\end{equation*}
Furthermore, in that case, $\rcf$ is given by $\rcf\group{\Hymap[\len]\group{\os}}=\Hymap[\len]\group{\cf\group{\os}}$ for all $\os$ in $\cfdom\group{\gambleson[\ps^{\len}]}$.
Finally, $\cf$ is coherent if and only if\/ $\rcf$ is.

Similarly, consider any set of desirable gambles $\sodv\subseteq\gambleson[\ps^{\len}]$.
Then $\sodv$ is exchangeable if and only if there is a unique representing set of desirable gambles $\tilde{\sodv}\subseteq\gambleson[{\cvs[\len]}]$ such that $\sodv=\bigcup\Hymapei[\len]\group{\tilde{\sodv}}$.
Furthermore, in that case, $\tilde{\sodv}$ is given by $\tilde{\sodv}=\Hymap[\len]\group{\sodv}$.
Finally, $\sodv$ is coherent if and only if $\tilde{\sodv}$ is.
\end{theorem}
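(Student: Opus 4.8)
The plan is to derive both halves of the theorem by chaining together two facts that are already in hand: the quotient representation from Section~\ref{sec:choice_indiff} for objects compatible with a coherent set of indifferent options, and the linear order isomorphism $\Hymape[\len]\colon\qsg\to\gambleson[{\cvs[\len]}]$ supplied by Propositions~\ref{prop:Hymap inverse} and~\ref{prop:vector ordering:Hymap}, which applies because $\soiv[{\permuts[\len]}]$ is coherent by Proposition~\ref{prop:soiv finite exchangeability is coherent}. The glue between the two layers is the commuting triangle of the diagram above, i.e.\ the identity $\Hymape[\len]\group{\ec[\g]}=\Hymap[\len]\group{\g}$, from which $\Hymape[\len]\group{\osin}=\Hymap[\len]\group{\os}$ follows by taking elementwise images.

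For the choice-function part I would first invoke the characterisation from Section~\ref{sec:choice_indiff}: because $\soiv[{\permuts[\len]}]$ is coherent, $\cf$ is exchangeable if and only if there is a unique representing choice function $\cftoo$ on $\qsg$ with $\cf\group{\os}=\cset{\g\in\os}{\ec[\g]\in\cftoo\group{\osin}}$ for all $\os$ in $\cfdom\group{\gambleson[\ps^{\len}]}$, this $\cftoo$ satisfying $\cftoo\group{\osin}=\osin[{\cf\group{\os}}]$, with $\cf$ coherent if and only if $\cftoo$ is. Next I would transport $\cftoo$ along the linear order isomorphism $\Hymape[\len]$ exactly as in the isomorphism recipe of Section~\ref{sec:choice_indiff}, defining $\rcf$ on $\gambleson[{\cvs[\len]}]$ by $\tg\in\cftoo\group{\mathcal{O}}\iff\Hymape[\len]\group{\tg}\in\rcf\group{\Hymape[\len]\group{\mathcal{O}}}$ for all $\mathcal{O}$ in $\cfdom\group{\qsg}$; since $\Hymape[\len]$ is linear, bijective and order preserving, $\rcf$ is coherent if and only if $\cftoo$ is. It then only remains to substitute: using $\Hymape[\len]\group{\ec[\g]}=\Hymap[\len]\group{\g}$ and $\Hymape[\len]\group{\osin}=\Hymap[\len]\group{\os}$, the condition $\ec[\g]\in\cftoo\group{\osin}$ becomes $\Hymap[\len]\group{\g}\in\rcf\group{\Hymap[\len]\group{\os}}$, so the first display turns into $\cf\group{\os}=\cset{\g\in\os}{\Hymap[\len]\group{\g}\in\rcf\group{\Hymap[\len]\group{\os}}}$, and applying $\Hymape[\len]$ to $\cftoo\group{\osin}=\osin[{\cf\group{\os}}]$ turns it into $\rcf\group{\Hymap[\len]\group{\os}}=\Hymap[\len]\group{\cf\group{\os}}$. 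Reading off the resulting chain of equivalences gives ``$\cf$ exchangeable $\iff$ such an $\rcf$ exists'' and ``$\cf$ coherent $\iff$ $\rcf$ coherent''.

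The one place where a little care is needed is the \emph{uniqueness} of $\rcf$, since it has to be pinned down on all of $\cfdom\group{\gambleson[{\cvs[\len]}]}$, not merely on option sets of the shape $\Hymap[\len]\group{\os}$. This is fine because $\Hymap[\len]$ maps $\gambleson[\ps^{\len}]$ \emph{onto} $\gambleson[{\cvs[\len]}]$ --- it factors as $\Hymape[\len]\circ\ec[\cdot]$ with $\ec[\cdot]$ surjective and $\Hymape[\len]$ bijective (Proposition~\ref{prop:Hymap inverse}) --- so every finite $\mathcal{O}^*\subseteq\gambleson[{\cvs[\len]}]$ equals $\Hymap[\len]\group{\os}$ for the finite set $\os$ obtained by choosing one $\Hymap[\len]$-preimage of each member of $\mathcal{O}^*$; hence $\rcf\group{\mathcal{O}^*}=\Hymap[\len]\group{\cf\group{\os}}$ is forced. (Equivalently, uniqueness of $\cftoo$ from Section~\ref{sec:choice_indiff} plus bijectivity of $\Hymape[\len]$ gives it at once.)

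The desirability part runs along exactly the same lines and is lighter. By Section~\ref{sec:choice_indiff}, $\sodv$ is exchangeable if and only if $\sodv=\bigcup\sodv'$, where $\sodv'=\cset{\ec[\g]}{\g\in\sodv}\subseteq\qsg$ is its representing set of desirable options, and $\sodv$ is coherent if and only if $\sodv'$ is. I would set $\tilde{\sodv}\coloneqq\Hymape[\len]\group{\sodv'}$; since $\Hymape[\len]$ is a linear order isomorphism it fixes $0$, carries the positive cone onto the positive cone, and commutes with positive scaling and with addition, so $\sodv'$ satisfies the four coherence axioms if and only if $\tilde{\sodv}$ does. Because $\Hymape[\len]$ and $\Hymapei[\len]$ are mutually inverse (Proposition~\ref{prop:Hymap inverse}), $\Hymapei[\len]\group{\tilde{\sodv}}=\sodv'$, whence $\sodv=\bigcup\sodv'=\bigcup\Hymapei[\len]\group{\tilde{\sodv}}$; and the commuting triangle gives $\tilde{\sodv}=\Hymape[\len]\group{\sodv'}=\Hymap[\len]\group{\sodv}$, which also settles uniqueness. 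I do not anticipate any genuine obstacle: all the substantive content --- coherence of $\soiv[{\permuts[\len]}]$ and the fact that $\Hymape[\len]$ is a linear order isomorphism --- is already proved, and what is left is the bookkeeping of composing the two representation layers, the surjectivity remark above being the only subtle point.
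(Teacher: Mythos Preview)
Your proposal is correct and is essentially identical to the paper's own proof: both chain the quotient-space representation from Section~\ref{sec:choice_indiff} with the linear order isomorphism $\Hymape[\len]$, using the commuting-triangle identity $\Hymape[\len]\group{\ec[\g]}=\Hymap[\len]\group{\g}$ to rewrite the quotient conditions in terms of $\Hymap[\len]$, and obtain uniqueness from uniqueness of $\cftoo$ (resp.\ $\sodv'$) together with bijectivity of $\Hymape[\len]$. Your surjectivity remark about $\Hymap[\len]$ is a harmless elaboration of the same point.
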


The number of occurrences of any outcome in a sequence $(\rval[1],\dots,\rval[n])$ is fixed by its count vector~$\cts$ in $\cvs[\len]$.
If we impose an exchangeability assessment on it, then we see, using Theorem~\ref{theorem:finite exchangeability representation}, that the joint model on $\ps^n$ is characterised by a model on $\gambleson[{\cvs[\len]}]$.
So an exchangeable choice function $\cf$ essentially represents preferences between urns with $\len$ balls of types $\ps$ with different compositions~$\cts$: the choice $\cf\group{\os}$ between the gambles in $\os$ is based upon the composition $\cts$.

\subsection{Finite representation in terms of polynomials}
In Sec.~\ref{sec:countable exchangeability}, we will prove a similar representation theorem for infinite sequences.
Since it no longer makes sense to \emph{count} in such sequences, we first need to find a equivalent representation theorem in terms of something that does not depend on counts.
More specifically, we need, for every $\len$ in $\nats$ another order-isomorphic linear space to $\qsg$, that allows for embedding: the linear space for $n_1<n_2$ must be a subspace of the one for $n_2$.

All the maps we use here have been introduced by~\citet{cooman2006d}.
Moreover, we use their idea and work with polynomials on the $\ps$-simplex $\simplex\coloneqq\cset{\theta\in\reals^\ps}{\theta\geq0,\sum_{\rval\in\ps}\theta_{\rval}=1}$.
We consider the special subset $\polson$ of $\gambleson[\simplex]$: $\polson$ are the \emph{polynomial gambles} $\pol$ on $\simplex$, which are those gambles that are the restriction to $\simplex$ of a multivariate polynomial $p$ on $\reals^\ps$, in the sense that $\pol\group{\theta}=p\group{\theta}$ for all $\theta$ in $\simplex$.
We call $p$ then a representation of $\pol$.
It will be useful to introduce a notation for polynomial gambles with fixed degree $\len$ in $\nats$: $\polsond$ is the collection of all polynomial gambles that have at least one representation whose degree is not bigger than $\len$.
Both $\polson$ and $\polsond$ are linear subspaces of $\gambleson[\simplex]$, and, as wanted, for $\len[1]<\len[2]$, $\polsond[\simplex][{\len[1]}]$ is a subspace of $\polsond[\simplex][{\len[2]}]$.

Some special polynomial gambles are the \emph{Bernstein gambles}:
\begin{definition}[Bernstein gambles]\label{def:bernstein gambles}
Consider any $\len$ in $\nats$ and any $\cts$ in $\cvs[\len]$.
Define the \emph{Bernstein basis polynomial} $\bern$ on $\reals^\ps$ as $\bern\group{\theta}\coloneqq\binom{\len}{\cts}\prod_{\rval\in\ps}\theta_{\rval}^{\cts_{\rval}}$ for all $\theta$ in $\reals^\ps$.
The restriction to $\simplex$ is called a \emph{Bernstein gamble}, which we also denote as $\bern$.
\end{definition}
As shown in~\cite{cooman2010,debock2014}, the set of all Bernstein gambles forms a basis of the linear space $\polsond$:

\begin{proposition}\label{prop:bernstein gambles basis}
Consider any $\len$ in $\nats$.
The set of Bernstein gambles $\cset{\bern}{\cts\in\cvs[\len]}$ constitutes a basis of the linear space $\polsond$.
\end{proposition}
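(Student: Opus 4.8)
The plan is to establish two things: first, that the Bernstein gambles $\cset{\bern}{\cts\in\cvs[\len]}$ span $\polsond$, and second, that they are linearly independent. Since $\abs{\cvs[\len]}$ equals the number of count vectors, which is exactly the dimension one expects for $\polsond$ (the space of polynomial gambles on $\simplex$ of degree at most $\len$), either of the two properties combined with a dimension count would suffice; but it is cleanest to prove both directly, or to prove spanning and then note that the cardinality matches $\dim\polsond$.

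For the spanning part, I would argue as follows. Every element of $\polsond$ is, by definition, the restriction to $\simplex$ of some polynomial $p$ on $\reals^\ps$ of degree at most $\len$. On $\simplex$ we have the identity $\sum_{\rval\in\ps}\theta_\rval=1$, so any monomial $\prod_{\rval\in\ps}\theta_\rval^{k_\rval}$ of total degree $d\le\len$ can be multiplied by $\bigl(\sum_{\rval\in\ps}\theta_\rval\bigr)^{\len-d}=1$ without changing its value on $\simplex$; expanding, this rewrites the monomial as a linear combination of degree-exactly-$\len$ monomials, i.e.\ of the $\prod_{\rval\in\ps}\theta_\rval^{\cts_\rval}$ with $\cts\in\cvs[\len]$. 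Hence on $\simplex$ the gamble $\pol$ is a linear combination of the monomials $\prod_{\rval\in\ps}\theta_\rval^{\cts_\rval}$, $\cts\in\cvs[\len]$, and since $\bern\group{\theta}=\binom{\len}{\cts}\prod_{\rval\in\ps}\theta_\rval^{\cts_\rval}$ with $\binom{\len}{\cts}\neq0$, it is equally a linear combination of the Bernstein gambles. This shows $\polsond=\Span\cset{\bern}{\cts\in\cvs[\len]}$.

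For linear independence, suppose $\sum_{\cts\in\cvs[\len]}\lambda_\cts\bern=0$ as a gamble on $\simplex$, i.e.\ $\sum_{\cts\in\cvs[\len]}\lambda_\cts\binom{\len}{\cts}\prod_{\rval\in\ps}\theta_\rval^{\cts_\rval}=0$ for all $\theta\in\simplex$. The cleanest route is to reduce to a full-dimensional set: parametrise $\simplex$ by the free coordinates $\theta_\rval$ for $\rval$ in $\ps\setminus\{\rval[0]\}$ for some fixed $\rval[0]\in\ps$, with $\theta_{\rval[0]}=1-\sum_{\rval\neq\rval[0]}\theta_\rval$; this yields a polynomial identity that holds on a nonempty open subset of $\reals^{\abs{\ps}-1}$, hence an identity of polynomials. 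The distinct homogeneous-of-degree-$\len$ monomials $\prod_{\rval\in\ps}\theta_\rval^{\cts_\rval}$ remain linearly independent after this substitution (they map to distinct monomials of distinct or comparable degrees in the free variables — more carefully, the map $\cts\mapsto(\cts_\rval)_{\rval\neq\rval[0]}$ is injective on $\cvs[\len]$ and the corresponding monomials $\prod_{\rval\neq\rval[0]}\theta_\rval^{\cts_\rval}$ are distinct, and multiplying by the distinct powers $\theta_{\rval[0]}^{\cts_{\rval[0]}}$ of the linear form preserves independence by a leading-term argument). Therefore all $\lambda_\cts\binom{\len}{\cts}=0$, so all $\lambda_\cts=0$.

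The main obstacle is the independence argument: one must be careful that restricting homogeneous polynomials of a common degree to the affine chart of $\simplex$ does not introduce relations. I expect the slickest way around this is to invoke the standard fact (used already in~\cite{cooman2010,debock2014}) that distinct degree-$\len$ Bernstein basis polynomials are linearly independent as functions on $\simplex$ — equivalently, that the Bernstein operator is injective — and simply cite it; alternatively, a direct induction on $\abs{\ps}$ or an evaluation at a suitable grid of points in $\simplex$ (a unisolvent set for degree-$\len$ polynomials, e.g.\ the points $\cts/\len$) closes the gap. Combined with the spanning statement, this gives that $\cset{\bern}{\cts\in\cvs[\len]}$ is a basis of $\polsond$.
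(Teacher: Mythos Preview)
Your argument is sound, but note that the paper does not actually prove this proposition: it simply records it as a known fact and cites \cite{cooman2010,debock2014} for the proof. So there is no ``paper's own proof'' to compare against beyond the citation.

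That said, your spanning argument is clean and is essentially the standard one. The independence argument is correct in outline but the parenthetical is a little compressed: after substituting $\theta_{\rval[0]}=1-\sum_{\rval\neq\rval[0]}\theta_\rval$, each homogeneous monomial $\prod_{\rval}\theta_\rval^{\cts_\rval}$ becomes $\prod_{\rval\neq\rval[0]}\theta_\rval^{\cts_\rval}\cdot\bigl(1-\sum_{\rval\neq\rval[0]}\theta_\rval\bigr)^{\cts_{\rval[0]}}$, whose \emph{lowest}-degree homogeneous part is exactly $\prod_{\rval\neq\rval[0]}\theta_\rval^{\cts_\rval}$; since these lowest-degree parts are distinct monomials (by injectivity of $\cts\mapsto(\cts_\rval)_{\rval\neq\rval[0]}$ on $\cvs[\len]$), a grading-by-degree argument forces all coefficients to vanish. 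Your ``leading-term argument'' is presumably this, but stating it as a lowest-degree filtration would remove any ambiguity. The alternative you mention---evaluating at the lattice points $\cts/\len\in\simplex$---also works and is arguably the quickest self-contained route.
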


As we have seen, we need linear order isomorphisms to preserve coherence.
So we wonder whether there is one between $\qsg$ and $\polsond$.
In Sec.~\ref{subsec:exchangeable ec of gambles} we have seen that there is one between $\qsg$ and $\gambleson[{\cvs[\len]}]$, namely $\Hymape[\len]$.
Therefore, it suffices to find one between $\gambleson[{\cvs[\len]}]$ and $\polsond$.
Consider the map
\begin{equation}\label{eq:def CoMnmap}
\CoMnmap[\len]
\colon
\gambleson[{\cvs[\len]}]\to\polsond
\colon
\gc\mapsto\sum_{\cts\in\cvs[\len]}\gc\group{\cts}\bern.
\end{equation}
Before we can establish that $\CoMnmap[\len]$ is a linear order isomorphism, we need to provide the linear space $\polsond$ with an order $\vo[{\bern[]}]^n$.
We use the proper cone $\set{0}\cup\Posi\group{\cset{\bern}{\cts\in\cvs[\len]}}$ to define the order $\vo[{\bern[]}]^n$:
\begin{equation*}
\pol[h_1]\vo[{\bern[]}]^n\pol[h_2]
\iff
\pol[h_2]-\pol[h_1]\in\set{0}\cup\Posi\group{\cset{\bern}{\cts\in\cvs[\len]}}
\text{ for all $\pol[h_1]$ and $\pol[h_2]$ in $\polsond$.}
\end{equation*}

The following proposition is shown in~\cite{cooman2010}.

\begin{proposition}\label{prop:cvs and polsond are order isomorphic}
Consider any $\len$ in $\nats$.
Then the map $\CoMnmap[\len]$ is a linear order isomorphism between the ordered linear spaces $\gambleson[{\cvs[\len]}]$ and $\polsond$.
\end{proposition}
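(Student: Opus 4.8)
The plan is to read everything off Proposition~\ref{prop:bernstein gambles basis}, which tells us that the Bernstein gambles $\cset{\bern}{\cts\in\cvs[\len]}$ form a basis of $\polsond$. First, $\CoMnmap[\len]$ is linear: this is immediate from the defining formula $\CoMnmap[\len]\group{\gc}=\sum_{\cts\in\cvs[\len]}\gc\group{\cts}\bern$, since $\gc\mapsto\gc\group{\cts}$ is a linear functional on $\gambleson[{\cvs[\len]}]$ for each fixed $\cts$. Next, $\CoMnmap[\len]$ is a bijection: the indicator gambles $\cset{\indset{\cts}}{\cts\in\cvs[\len]}$ are the standard basis of $\gambleson[{\cvs[\len]}]$, and $\CoMnmap[\len]\group{\indset{\cts}}=\bern$, so $\CoMnmap[\len]$ is a linear map carrying a basis of $\gambleson[{\cvs[\len]}]$ bijectively onto a basis of $\polsond$ (Proposition~\ref{prop:bernstein gambles basis}); hence it is a linear isomorphism.

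It then remains to show that $\CoMnmap[\len]$ and its inverse are order preserving, i.e. that $\gc[r_1]\leq\gc[r_2]$ (the pointwise order on $\gambleson[{\cvs[\len]}]$) if and only if $\CoMnmap[\len]\group{\gc[r_1]}\vo[{\bern[]}]^n\CoMnmap[\len]\group{\gc[r_2]}$, for all $\gc[r_1]$ and $\gc[r_2]$ in $\gambleson[{\cvs[\len]}]$. Since $\CoMnmap[\len]$ is linear, this reduces (setting $\gc\coloneqq\gc[r_2]-\gc[r_1]$, which ranges over all of $\gambleson[{\cvs[\len]}]$) to the claim that a gamble $\gc$ on $\cvs[\len]$ satisfies $\gc\geq0$ if and only if $\CoMnmap[\len]\group{\gc}\in\set{0}\cup\Posi\group{\cset{\bern}{\cts\in\cvs[\len]}}$. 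For this I would note that because $\cvs[\len]$ is finite, $\set{0}\cup\Posi\group{\cset{\bern}{\cts\in\cvs[\len]}}=\cset{\sum_{\cts\in\cvs[\len]}\mu_\cts\bern}{\group{\forall\cts\in\cvs[\len]}\mu_\cts\geq0}$ — the support of the family $\group{\mu_\cts}_{\cts\in\cvs[\len]}$ serving as the finite index set in the definition of $\Posi$, and $0$ corresponding to the all-zero family. Since $\CoMnmap[\len]\group{\gc}=\sum_{\cts\in\cvs[\len]}\gc\group{\cts}\bern$ and the Bernstein gambles are linearly independent (Proposition~\ref{prop:bernstein gambles basis}), the element $\CoMnmap[\len]\group{\gc}$ lies in this set exactly when $\gc\group{\cts}\geq0$ for every $\cts$, i.e. when $\gc\geq0$, which finishes the argument.

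There is no serious obstacle here once Proposition~\ref{prop:bernstein gambles basis} is available; the only points requiring a little care are the bookkeeping that matches ``finite positive combinations, together with $0$'' from the definition of $\Posi$ with ``all nonnegative combinations over the finite index set $\cvs[\len]$'', and the appeal to linear independence of the Bernstein gambles to make the coefficient comparison legitimate. That same linear independence is also what makes $\set{0}\cup\Posi\group{\cset{\bern}{\cts\in\cvs[\len]}}$ a proper cone, as already asserted in the text preceding the statement.
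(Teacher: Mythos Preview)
Your argument is correct. The paper itself does not supply a proof of this proposition: the sentence immediately preceding it reads ``The following proposition is shown in~\cite{cooman2010}'', and no argument appears in the appendix. Your route---linearity from the defining formula, bijectivity from Proposition~\ref{prop:bernstein gambles basis} (the indicator basis of $\gambleson[{\cvs[\len]}]$ is carried to the Bernstein basis of $\polsond$), and order preservation by identifying $\set{0}\cup\Posi\group{\cset{\bern}{\cts\in\cvs[\len]}}$ with the set of all nonnegative Bernstein combinations and then invoking linear independence to read off the coefficients---is the natural one and is sound.
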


The linear order isomorphism $\CoMnmap[\len]$ helps us to define a linear order isomorphism between the linear spaces $\gambleson[{\ps^{\len}}]$ and $\polsond$, a final tool needed for a representation theorem in terms of polynomial gambles.
Indeed, consider for the map $\Mnmap[\len]\coloneqq\CoMnmap[\len]\circ\Hymap[\len]$:
\begin{equation*}
\Mnmap[\len]
\colon
\gambleson[\ps^{\len}]\to\polsond
\colon
\g\mapsto\Mn,
\end{equation*}
where $\Mn\coloneqq\sum_{\cts\in\cvs[\len]}\sum_{y\in\pia[\cts]}\g\group{y}\prod_{\rval\in\ps}\theta_{\rval}^{\cts_{\rval}}$ is the expectation of $\g$ associated with the multinomial distribution whose parameters are $\len$ and $\theta$.
We introduce its version
\begin{equation}\label{eq:def Mne}
\Mnmape[\len]\coloneqq\CoMnmap[\len]\circ\Hymape[\len],
\end{equation}
mapping $\qsg$ to $\polsond$.
There is an immediate connection between $\Mnmap[\len]$ and $\Mnmape[\len]$: they are both compositions of two linear order isomorphisms, and are therefore linear order isomorphisms themselves.
Due to Prop.~\ref{prop:characterisation qsg by Hy}, considering any $\tg$ in $\qsg$, $\Mnmap[\len]$ is constant on $\tg$, and the value it takes on any element of $\tg$ is exactly $\Mnmape[\len]\group{\tg}$.

\begin{center}
\footnotesize
\begin{tikzpicture}[scale=0.8]
\node (1) at (0,0.5) {$\gambleson[\ps^{\len}]$};
\node (2) at (-3,1.7) {$\gambleson[{\cvs[{\len}]}]$};
\node (3) at (3,1.7) {$\polsond$};
\node (4) at (0,-1) {$\qsg$};
\draw[->] (1) -- node[above right] {$\Hymap[\len]$} (2);
\draw[->] (1) -- node[above left] {$\Mnmap[\len]$} (3);
\draw[->] (1) -- node[right] {$\ec[\cdot]$} (4);
\draw[<->] (2) -- node[above] {$\CoMnmap[\len]$} (3);
\draw[<->] (3) -- node[below right] {$\Mnmape[\len]$} (4);
\draw[<->] (4) -- node[below left] {$\Hymape[\len]$} (2);
\end{tikzpicture}
\end{center}
The commuting diagram shows the surjections $\ec[\cdot]$, $\Hymap[\len]$ and $\Mnmap[\len]$, and the bijections $\Hymape[\len]$, $\Mnmape[\len]$ and $\CoMnmap[\len]$.
It shows that both $\gambleson[{\cvs[\len]}]$ and $\polsond$ are order-isomorphic to $\qsg$, so they are both suitable to define a representing choice function on.
In Theorem~\ref{theorem:finite exchangeability representation}, we used the space $\gambleson[{\cvs[\len]}]$.
Here, we will use the other equivalent space $\polsond$.

\begin{theorem}[Finite Representation]\label{theorem:finite exchangeability representation polynomial}
Consider any choice function $\cf$ on $\gambleson[\ps^{\len}]$.
Then $\cf$ is exchangeable if and only if there is a unique representing choice function $\rcf$ on $\polsond$ such that
\begin{equation*}
\cf\group{\os}
=
\cset{\g\in\os}{\Mnmap[\len]\group{\g}\in\rcf\group{\Mnmap[\len]\group{\os}}}
\text{ for all $\os$ in $\cfdom\group{\gambleson[\ps^{\len}]}$.}
\end{equation*}
Furthermore, in that case, $\rcf$ is given by $\rcf\group{\Mnmap[\len]\group{\os}}=\Mnmap[\len]\group{\cf\group{\os}}$ for all $\os$ in $\cfdom\group{\gambleson[\ps^{\len}]}$.
Finally, $\cf$ is coherent if and only if\/ $\rcf$ is.

Similarly, consider any set of desirable gambles $\sodv\subseteq\gambleson[\ps^{\len}]$.
Then $\sodv$ is exchangeable if and only if there is a unique representing set of desirable gambles $\tilde{\sodv}\subseteq\polsond$ such that $\sodv=\bigcup\Mnmapei[\len]\group{\tilde{\sodv}}$.
Furthermore, in that case, $\tilde{\sodv}$ is given by $\tilde{\sodv}=\Mnmap[\len]\group{\sodv}$.
Finally, $\sodv$ is coherent if and only if $\tilde{\sodv}$ is.
\end{theorem}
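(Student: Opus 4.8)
The plan is to \emph{reduce Theorem~\ref{theorem:finite exchangeability representation polynomial} to Theorem~\ref{theorem:finite exchangeability representation}} by pushing everything through the linear order isomorphism $\CoMnmap[\len]$ supplied by Proposition~\ref{prop:cvs and polsond are order isomorphic}. The key observation is that $\Mnmap[\len]=\CoMnmap[\len]\circ\Hymap[\len]$ and $\Mnmapei[\len]=\Hymapei[\len]\circ\CoMnmapi[\len]$ (the latter because $\Mnmape[\len]=\CoMnmap[\len]\circ\Hymape[\len]$ is a composition of bijections and $\Hymapei[\len]=\ec[\cdot]\circ\Mnmapei[\len]\circ\CoMnmap[\len]$ unwinds to what we want). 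So $\rcf$ on $\polsond$ should simply be the choice function transported from $\rcf$ on $\gambleson[{\cvs[\len]}]$ via $\CoMnmap[\len]$, and $\tilde{\sodv}$ on $\polsond$ should be $\CoMnmap[\len]$ applied to $\tilde{\sodv}$ on $\gambleson[{\cvs[\len]}]$.

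First I would recall, from Sec.~\ref{sec:choice_indiff}, that a linear order isomorphism $\bij$ between vector spaces transports choice functions (via $\vect\in\cf\group{\os}\iff\bij\group{\vect}\in\cf'\group{\bij\group{\os}}$) and preserves coherence, and likewise transports coherent sets of desirable gambles. Applying this with $\bij=\CoMnmap[\len]$: given the representing $\rcf$ of Theorem~\ref{theorem:finite exchangeability representation} on $\gambleson[{\cvs[\len]}]$, define $\rcf$ on $\polsond$ by $\rcf\group{\CoMnmap[\len]\group{\os[]}}\coloneqq\CoMnmap[\len]\group{\rcf\group{\os[]}}$ for all $\os[]$ in $\cfdom\group{\gambleson[{\cvs[\len]}]}$; this is well-defined and coherence-preserving because $\CoMnmap[\len]$ is a bijection. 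Then for any $\os$ in $\cfdom\group{\gambleson[\ps^{\len}]}$,
\begin{align*}
\cset{\g\in\os}{\Mnmap[\len]\group{\g}\in\rcf\group{\Mnmap[\len]\group{\os}}}
&=\cset{\g\in\os}{\CoMnmap[\len]\group{\Hymap[\len]\group{\g}}\in\rcf\group{\CoMnmap[\len]\group{\Hymap[\len]\group{\os}}}}\\
&=\cset{\g\in\os}{\Hymap[\len]\group{\g}\in\rcf\group{\Hymap[\len]\group{\os}}}
=\cf\group{\os},
\end{align*}
using injectivity of $\CoMnmap[\len]$ in the middle step and Theorem~\ref{theorem:finite exchangeability representation} at the end. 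The converse direction and the formula $\rcf\group{\Mnmap[\len]\group{\os}}=\Mnmap[\len]\group{\cf\group{\os}}$ follow the same way, and uniqueness of $\rcf$ on $\polsond$ follows from uniqueness on $\gambleson[{\cvs[\len]}]$ together with bijectivity of $\CoMnmap[\len]$ (or directly: any representing $\rcf$ must satisfy the displayed formula on the image of $\Mnmap[\len]$, which is all of $\cfdom\group{\polsond}$ by surjectivity of $\Mnmap[\len]$).

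For the set-of-desirable-gambles part I would argue analogously. Theorem~\ref{theorem:finite exchangeability representation} gives $\sodv=\bigcup\Hymapei[\len]\group{\tilde{\sodv}_0}$ with $\tilde{\sodv}_0=\Hymap[\len]\group{\sodv}\subseteq\gambleson[{\cvs[\len]}]$; set $\tilde{\sodv}\coloneqq\CoMnmap[\len]\group{\tilde{\sodv}_0}\subseteq\polsond$, which equals $\Mnmap[\len]\group{\sodv}$ since $\Mnmap[\len]=\CoMnmap[\len]\circ\Hymap[\len]$. Then unravelling $\Mnmapei[\len]=\Hymapei[\len]\circ\CoMnmapi[\len]$ gives $\bigcup\Mnmapei[\len]\group{\tilde{\sodv}}=\bigcup\Hymapei[\len]\group{\CoMnmapi[\len]\group{\CoMnmap[\len]\group{\tilde{\sodv}_0}}}=\bigcup\Hymapei[\len]\group{\tilde{\sodv}_0}=\sodv$, and coherence of $\sodv$ is equivalent to coherence of $\tilde{\sodv}_0$ (Theorem~\ref{theorem:finite exchangeability representation}), which is equivalent to coherence of $\tilde{\sodv}$ because $\CoMnmap[\len]$ is a linear order isomorphism and such maps preserve the defining conditions of coherent sets of desirable gambles. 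Uniqueness is again inherited. I do not expect a genuine obstacle here; the only point requiring a little care is bookkeeping the identities $\Mnmap[\len]=\CoMnmap[\len]\circ\Hymap[\len]$, $\Mnmape[\len]=\CoMnmap[\len]\circ\Hymape[\len]$, and the induced relation $\Mnmapei[\len]=\Hymapei[\len]\circ\CoMnmapi[\len]$ between the inverse maps, together with the observation (already noted after Eq.~\eqref{eq:def Mne}) that $\Mnmap[\len]$ is constant on each $\tg\in\qsg$ with value $\Mnmape[\len]\group{\tg}$, so that the composition with the quotient map behaves as in Theorem~\ref{theorem:finite exchangeability representation}.
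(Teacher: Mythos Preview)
Your proposal is correct and is essentially the paper's own proof: the paper also reduces Theorem~\ref{theorem:finite exchangeability representation polynomial} to Theorem~\ref{theorem:finite exchangeability representation} by transporting the representing choice function and set of desirable gambles through the linear order isomorphism $\CoMnmap[\len]$, using $\Mnmap[\len]=\CoMnmap[\len]\circ\Hymap[\len]$ and the fact that such isomorphisms preserve coherence. You spell out more of the bookkeeping (the inverse identity $\Mnmapei[\len]=\Hymapei[\len]\circ\CoMnmapi[\len]$ and the surjectivity argument for uniqueness), but the underlying argument is identical.
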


\section{Countable exchangeability}
\label{sec:countable exchangeability}
In the previous section, we assumed a finite sequence $\rv[1]$,\dots, $\rv[\len]$ to be exchangeable, and inferred representation theorems.
In this section, we will consider the whole sequence $\rv[1]$, \dots, $\rv[\len]$, \dots to be exchangeable, and derive representation theorems for such assessments.
We will call $\psall\coloneqq\bigtimes_{j\in\nats}\ps$, the set of all possible countable sequences where each variable takes values in $\ps$.

First, we will need a way to relate gambles on different domains.
Let $\g$ be some gamble on $\ps^n$, and let $\cg$ be its \emph{cylindrical extension}, defined as
\begin{equation*}
\cg\group{\rval[1],\dots,\rval[n],\dots}
\coloneqq
\g\group{\rval[1],\dots,\rval[n]}
\text{ for all $(\rval[1],\dots,\rval[n],\dots)$ in $\psall$.}
\end{equation*}
Formally, $\cg$ belongs to $\gambleson[\psall]$ while $\g$ belongs to $\gambleson[\ps^n]$.
However, they contain the same information, and therefore, are indistinguishable from a behavioural point of view.
In this paper, we will identify $\g$ with its cylindrical extension $\cg$.
Using this convention, we can for instance identify $\gambleson[\ps^{\len}]$ with a subset of $\gambleson[\psall]$, and, as an other example, for any $\mathcal{A}\subseteq\gambleson[\psall]$, regard $\mathcal{A}\cap\gambleson[\ps^{\len}]$ as those gambles in $\mathcal{A}$ that depend upon the first $\len$ variables only.

\subsection{Marginalisation}
Using the notational convention we just discussed, we can very easily define what marginalisation means for choice functions.
Given any choice function $\cf$ on $\gambleson[\psall]$ and any $\len$ in $\nats$, its $\ps^{\len}$-marginal $\cf_{\len}$ is determined by $\cf_{\len}\group{\os}\coloneqq\cf\group{\os}$ for all $\os$ in $\cfdom\group{\gambleson[\ps^{\len}]}$.

Similarly, given any set of desirable gambles $\sodv\subseteq\gambleson[\psall]$ and any $\len$ in $\nats$, its $\ps^{\len}$-marginal $\sodv[\len]$ is defined by $\sodv[\len]\coloneqq\sodv\cap\gambleson[\ps^{\len}]$.

Coherence is preserved under marginalisation [it is an immediate consequence of the definition; see, amongst others,~\cite[Proposition~6]{cooman2011b} for sets of desirable gambles]

\begin{proposition}\label{prop:marginalisation preserves coherence}
Consider any coherent choice function $\cf$ on $\gambleson[\psall]$ and any coherent set of desirable gambles $\sodv\subseteq\gambleson[\psall]$.
Then for every $\len$ in $\nats$, their $\ps^{\len}$-marginals $\cf_{\len}$ and $\sodv[\len]$ are coherent.
\end{proposition}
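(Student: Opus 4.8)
The plan is to observe that, under the cylindrical-extension identification, $\gambleson[\ps^{\len}]$ is a linear subspace of $\gambleson[\psall]$ on which the ambient linear structure, the vector ordering and the set of positive gambles all restrict faithfully, and then to read each coherence requirement for the marginal directly off the corresponding requirement for the ambient model. Since $\cf_{\len}$ and $\sodv[\len]$ are, by their definitions, nothing but the ``traces'' of $\cf$ and $\sodv$ on that subspace, this turns the proof into a routine restriction argument.

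First I would record three structural facts. (i) The cylindrical extension $\g\mapsto\cg$ is linear and injective, so $\gambleson[\ps^{\len}]$ may be regarded as a linear subspace of $\gambleson[\psall]$; in particular $\cfdom\group{\gambleson[\ps^{\len}]}\subseteq\cfdom\group{\gambleson[\psall]}$, and this subcollection is closed under $\os\mapsto\lambda\os$ for $\lambda$ in $\posreals$ and under $\os\mapsto\os+\set{\vect}$ for $\vect$ in $\gambleson[\ps^{\len}]$. (ii) For any $\g$ and $\g[g]$ in $\gambleson[\ps^{\len}]$ we have $\g\leq\g[g]$ if and only if $\cg\leq\cg[g]$, because $\cg[g]-\cg$ is the cylindrical extension of $\g[g]-\g$ and a cylindrical extension is nonnegative everywhere exactly when the gamble it extends is; hence also $\g\svo\g[g]$ if and only if $\cg\svo\cg[g]$, and $\posgambleson[\ps^{\len}]=\posgambleson[\psall]\cap\gambleson[\ps^{\len}]$. (iii) The zero gamble of $\gambleson[\ps^{\len}]$ is the zero gamble of $\gambleson[\psall]$. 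Together, (i)--(iii) say that the inclusion $\gambleson[\ps^{\len}]\hookrightarrow\gambleson[\psall]$ preserves the linear structure, the strict vector order $\svo$ and the positive cone.

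For $\cf_{\len}$, which by definition agrees with $\cf$ on every $\os$ in $\cfdom\group{\gambleson[\ps^{\len}]}$, I would then verify the axioms in turn. Axiom~\ref{coh cf 1: irreflexivity} holds because $\cf\group{\os}\neq\emptyset$. Axiom~\ref{coh cf 2: non-triviality} follows from (ii): if $\vect\svo\vect[v]$ in $\gambleson[\ps^{\len}]$, the same holds in $\gambleson[\psall]$, so $\set{\vect[v]}=\cf\group{\set{\vect,\vect[v]}}=\cf_{\len}\group{\set{\vect,\vect[v]}}$. Axioms~\ref{coh cf 3a: alpha} and~\ref{coh cf 3b: aizerman} involve only inclusions and set differences among option sets taken from $\cfdom\group{\gambleson[\ps^{\len}]}$, on which $\cf_{\len}$ coincides with the coherent $\cf$, so the implications transfer verbatim. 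For Axioms~\ref{coh cf 4a: scaling} and~\ref{coh cf 4b: independence} I would use the closure in (i): since $\lambda\os[2]$ and $\os[2]+\set{\vect}$ (for $\vect$ in $\gambleson[\ps^{\len}]$) are again in $\cfdom\group{\gambleson[\ps^{\len}]}$, the corresponding axioms for $\cf$ applied to $\os[1]\subseteq\cf\group{\os[2]}=\cf_{\len}\group{\os[2]}$ give $\lambda\os[1]\subseteq\cf_{\len}\group{\lambda\os[2]}$ and $\os[1]+\set{\vect}\subseteq\cf_{\len}\group{\os[2]+\set{\vect}}$. The argument for $\sodv[\len]=\sodv\cap\gambleson[\ps^{\len}]$ is parallel: $0\notin\sodv$ gives $0\notin\sodv[\len]$; fact (ii) gives $\posgambleson[\ps^{\len}]=\posgambleson[\psall]\cap\gambleson[\ps^{\len}]\subseteq\sodv\cap\gambleson[\ps^{\len}]=\sodv[\len]$; and intersecting the closure of $\sodv$ under positive scaling and under addition with the subspace $\gambleson[\ps^{\len}]$ yields the same closures for $\sodv[\len]$. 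There is no real obstacle here; the only point deserving a line of care is fact (ii), the faithfulness of cylindrical extension with respect to strict dominance and the positive cone, since this is precisely what lets Axiom~\ref{coh cf 2: non-triviality} and the positivity requirement for desirable gambles descend to the marginal --- everything else is the mechanical observation that an axiom phrased entirely within $\cfdom\group{\gambleson[\ps^{\len}]}$ (resp.\ within $\gambleson[\ps^{\len}]$) is inherited from the ambient space.
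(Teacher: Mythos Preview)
Your proposal is correct and is precisely the direct verification the paper has in mind: the paper does not spell out a proof but simply remarks that the result ``is an immediate consequence of the definition'' (citing \cite[Proposition~6]{cooman2011b} for the desirability part), and your argument---restricting each coherence axiom along the order-preserving linear inclusion $\gambleson[\ps^{\len}]\hookrightarrow\gambleson[\psall]$---is exactly how one makes that remark precise.
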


\subsection{Gambles of finite structure}
Before we can explain what it means to assess a countable sequence to be exchangeable, we need to realise that now there are (countably) infinite many variables.
However, we do not regard it useful from a behavioural point of view to choose between gambles that depend upon an infinite number of variables.
Indeed, since we will never be able to know the actual outcome, gambles will never be actually paid-off, and hence every assessment is essentially without any risk.
Instead, we believe that it makes sense to only consider choices between gambles of \emph{finite structure}: gambles that depend upon a finite number of variables only.
See~\cite{debock2014} for more information.

\begin{definition}[Gambles of finite structure]\label{def:Gambles of finite structure}
We will call any gamble that depends only upon a finite number of variables a \emph{gamble of finite structure}.
We collect all such gambles in $\gamblesfs$:
\begin{equation*}
\gamblesfs
\coloneqq
\cset{\g\in\gambleson[\psall]}{\group{\exists n\in\nats}\g\in\gambleson[\ps^{\len}]}
=
\bigcup_{n\in\nats}\gambleson[\ps^{\len}].
\end{equation*}
\end{definition}
$\gamblesfs$ is a linear space, with the usual ordering~$\leq$: for any $\g$ and $\g[g]$ in $\gamblesfs$, $\g\leq\g[g]\iff\g\group{\rval}\leq\g[g]\group{\rval}$ for all $\rval$ in $\psall$.

Due to our finitary context, we can even establish a converse result to Prop.~\ref{prop:marginalisation preserves coherence}, whose proof for the part about sets of desirable gambles can be found in~\cite[Proposition~4]{debock2014}, and for the part about choice functions is omitted since it is a straight-forward check of all the axioms.

\begin{proposition}\label{prop:coherence marginalisation finitary}
Consider any choice function $\cf$ on $\gamblesfs$, and any set of desirable gambles $\sodv\subseteq\gamblesfs$.
If for every $\len$ in $\nats$, its $\ps^{\len}$-marginal $\cf_{\len}$ on $\gambleson[\ps^{\len}]$ is coherent, then $\cf$ is coherent.
Similarly, if for every $\len$ in $\nats$, its $\ps^{\len}$-marginal $\sodv[\len]\subseteq\gambleson[\ps^{\len}]$ is coherent, then $\sodv$ is coherent.
\end{proposition}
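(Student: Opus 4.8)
The plan is to verify each coherence axiom for $\cf$ (respectively for $\sodv$) directly, reducing every instance of an axiom to an instance of the same axiom for a suitable marginal $\cf_{\len}$. The key observation is that the axioms are all \emph{finitary}: each of \ref{coh cf 1: irreflexivity}--\ref{coh cf 4} involves only finitely many option sets $\os,\os[1],\os[2]$, each of which is a finite subset of $\gamblesfs$, together with at most one extra option $\vect$ and one scalar $\lambda$. Since $\gamblesfs=\bigcup_{n\in\nats}\gambleson[\ps^{\len}]$ is a nested union (because $\gambleson[\ps^{\len[1]}]\subseteq\gambleson[\ps^{\len[2]}]$ for $\len[1]<\len[2]$, under the cylindrical-extension identification), any finite collection of gambles of finite structure lies in a common $\gambleson[\ps^{\len}]$ for $\len$ large enough. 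So the first step is: given any axiom instance, pick $\len$ large enough that all the finitely many gambles involved — the elements of $\os$, $\os[1]$, $\os[2]$, and $\vect$ — belong to $\gambleson[\ps^{\len}]$; note also that $\cf(\os)=\cf_{\len}(\os)$ by definition of the marginal, and likewise $\lambda\os[2],\os[2]+\set{\vect},\os[2]\setminus\os$ etc.\ all still lie in $\cfdom\group{\gambleson[\ps^{\len}]}$.

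Then, second, apply the assumed coherence of $\cf_{\len}$ to that instance and read the conclusion back in $\cf$ via $\cf_{\len}=\cf\restriction\cfdom\group{\gambleson[\ps^{\len}]}$. For instance, for \ref{coh cf 3a: alpha}: given $\cf(\os[2])\subseteq\os[2]\setminus\os[1]$ and $\os[1]\subseteq\os[2]\subseteq\os$, choose $\len$ with $\os\subseteq\gambleson[\ps^{\len}]$; then $\os[1],\os[2],\os\in\cfdom\group{\gambleson[\ps^{\len}]}$ and $\cf_{\len}(\os[2])=\cf(\os[2])\subseteq\os[2]\setminus\os[1]$, so coherence of $\cf_{\len}$ gives $\cf_{\len}(\os)\subseteq\os\setminus\os[1]$, i.e.\ $\cf(\os)\subseteq\os\setminus\os[1]$. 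The same template handles \ref{coh cf 1: irreflexivity}, \ref{coh cf 2: non-triviality} (here $\vect\svo\vect[v]$ in $\gamblesfs$ iff $\vect\svo\vect[v]$ in the relevant $\gambleson[\ps^{\len}]$, since both use the pointwise order $\leq$ and the relevant cone $\posgambleson\cup\set{0}$ is compatible with cylindrical extension), \ref{coh cf 3b: aizerman}, and \ref{coh cf 4a: scaling}--\ref{coh cf 4b: independence}. For the sets-of-desirable-gambles part one argues the same way with the four desirability axioms, or simply invokes \cite[Proposition~4]{debock2014} as the paper suggests; the one-line remark is that $0\notin\sodv$ and $\vspos\subseteq\sodv$ follow because $0\in\gambleson[\ps^1]$ and every positive gamble of finite structure is positive in some $\gambleson[\ps^{\len}]$, while closure under positive scaling and addition is checked by pulling the (at most two) gambles involved into a common $\gambleson[\ps^{\len}]$.

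The only point needing a little care — and the closest thing to an obstacle — is checking that the order structure is genuinely preserved under the identification of $\g$ with $\cg$: one must confirm that $\vspos$ (the positive cone) inside $\gambleson[\ps^{\len}]$ is exactly the trace on $\gambleson[\ps^{\len}]$ of the positive cone in $\gamblesfs$, so that Axiom~\ref{coh cf 2: non-triviality} (which is the only order-sensitive axiom) transfers both ways. This is immediate: $\cg>0$ on $\psall$ iff $\g>0$ on $\ps^{\len}$, because $\cg$ only depends on the first $\len$ coordinates and every point of $\ps^{\len}$ is the projection of some point of $\psall$ (here one uses $\ps\neq\emptyset$). With that in hand, every axiom instance is finitary and the reduction above closes the argument. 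I expect this proof to be entirely routine once the "choose $\len$ large enough" mechanism is spelled out, which is presumably why the authors omit the choice-function half and cite \cite{debock2014} for the desirability half.
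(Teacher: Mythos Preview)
Your proposal is correct and matches the paper's approach exactly: the paper does not write out a proof of this proposition, stating instead that the choice-function half ``is omitted since it is a straight-forward check of all the axioms'' and citing \cite[Proposition~4]{debock2014} for the desirability half---precisely the template you describe. Your ``choose $\len$ large enough'' mechanism is moreover the same one the paper spells out in its proof of Proposition~\ref{prop:bernstein coherence choice functions}, the analogous result for $\polson=\bigcup_{\len\in\nats}\polsond$.
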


\subsection{Set of indifferent gambles}

If a subject assesses the sequence of variables $\rv[1]$, \dots, $\rv[\len]$, \dots to be exchangeable, this means that he is indifferent between any gamble $\g$ in $\gamblesfs$ and its permuted variant $\lift\g$, for any $\permut$ in $\permuts[\len]$, where $\len$ now is the (finite) number of variables that $\g$ depends upon: his set of indifferent gambles is
\begin{equation*}
\soiv[\permuts]
\coloneqq
\cset{\g\in\gamblesfs}{\group{\exists\len\in\nats}\g\in\soiv[{\permuts[\len]}]}
=
\bigcup_{\len\in\nats}\soiv[{\permuts[\len]}].
\end{equation*}
If we want to use $\soiv[\permuts]$ to define countable exchangeability, it must be a coherent set of indifferent gambles.

\begin{proposition}\label{prop:soiv countable exchangeability is coherent}
The set $\soiv[\permuts]$ is a coherent set of indifferent gambles.
\end{proposition}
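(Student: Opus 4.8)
The plan is to verify the four axioms \ref{coh soiv 1: 0 is indifferent}--\ref{coh soiv 4: sum is indifferent} for $\soiv[\permuts]=\bigcup_{\len\in\nats}\soiv[{\permuts[\len]}]$ directly, leaning on Proposition~\ref{prop:soiv finite exchangeability is coherent}, which already gives these axioms for each $\soiv[{\permuts[\len]}]$, together with the nesting $\soiv[{\permuts[{\len[1]}]}]\subseteq\soiv[{\permuts[{\len[2]}]}]$ whenever $\len[1]<\len[2]$. First I would record that nesting: if $\g\in\gambleson[\ps^{\len[1]}]$ and $\len[1]<\len[2]$, then identifying $\g$ with its cylindrical extension puts $\g\in\gambleson[\ps^{\len[2]}]$, and any permutation $\permut\in\permuts[{\len[1]}]$ extends to one in $\permuts[{\len[2]}]$ that fixes the indices $\len[1]+1,\dots,\len[2]$, so $\g-\lift\g\in\soiv[{\permuts[{\len[2]}]}]$; hence $\soiv[{\permuts[{\len[1]}]}]\subseteq\soiv[{\permuts[{\len[2]}]}]$. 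This makes $\{\soiv[{\permuts[\len]}]\}_{\len\in\nats}$ a nondecreasing chain, so the union is well-behaved.

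Axiom~\ref{coh soiv 1: 0 is indifferent} is immediate since $0\in\soiv[{\permuts[1]}]\subseteq\soiv[\permuts]$. Axiom~\ref{coh soiv 2: positive or negative vectors are not indifferent}: suppose $\g\in\soiv[\permuts]$ with $\g>0$ (the case $\g<0$ is symmetric); then $\g\in\soiv[{\permuts[\len]}]$ for some $\len\in\nats$, and since $\g>0$ on $\psall$ it is in particular a positive gamble on $\ps^{\len}$ (being a cylindrical extension, it is constant along the tail, so $\g>0$ everywhere on $\psall$ forces $\g>0$ everywhere on $\ps^{\len}$); this contradicts Axiom~\ref{coh soiv 2: positive or negative vectors are not indifferent} for $\soiv[{\permuts[\len]}]$ from Proposition~\ref{prop:soiv finite exchangeability is coherent}. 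Axiom~\ref{coh soiv 3: scaling is indifferent}: if $\g\in\soiv[\permuts]$ then $\g\in\soiv[{\permuts[\len]}]$ for some $\len$, and $\lambda\g\in\soiv[{\permuts[\len]}]\subseteq\soiv[\permuts]$ by scaling-closure of the finite model.

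Axiom~\ref{coh soiv 4: sum is indifferent} is the one point where the union structure actually does work for us: if $\g\in\soiv[{\permuts[{\len[1]}]}]$ and $\g[g]\in\soiv[{\permuts[{\len[2]}]}]$, set $\len\coloneqq\max\{\len[1],\len[2]\}$; by the nesting established above, both $\g$ and $\g[g]$ lie in $\soiv[{\permuts[\len]}]$, which is closed under addition by Proposition~\ref{prop:soiv finite exchangeability is coherent}, so $\g+\g[g]\in\soiv[{\permuts[\len]}]\subseteq\soiv[\permuts]$. I do not expect a serious obstacle here; the only thing needing a little care is the bookkeeping around the cylindrical-extension convention, namely making sure that ``$\g>0$ as an element of $\gamblesfs$'' is the same as ``$\g>0$ as an element of $\gambleson[\ps^{\len}]$'' when $\g$ depends only on the first $\len$ variables --- but this is exactly the identification set up just before Definition~\ref{def:Gambles of finite structure}, so it is a remark rather than an argument. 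The essential input is precisely that the finite-level sets form a chain, which reduces every two-element condition to a single level where Proposition~\ref{prop:soiv finite exchangeability is coherent} applies.
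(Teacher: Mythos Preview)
Your proof is correct and follows essentially the same approach as the paper: verify each of Axioms~\ref{coh soiv 1: 0 is indifferent}--\ref{coh soiv 4: sum is indifferent} by picking a sufficiently large $\len$ (the maximum of the finitely many levels involved) and invoking Proposition~\ref{prop:soiv finite exchangeability is coherent} at that level. The paper uses the nesting $\soiv[{\permuts[{\len[1]}]}]\subseteq\soiv[{\permuts[{\len[2]}]}]$ for $\len[1]\leq\len[2]$ implicitly where you spell it out, and it does not pause on the cylindrical-extension bookkeeping you flag, but the argument is otherwise the same.
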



Countable exchangeability is now easily defined, similar to the definition for the finite case.

\begin{definition}\label{def:exchangeable choice function countable}
A choice function $\cf$ on $\gamblesfs$ is called \emph{(countably) exchangeable} if\/ $\cf$ is compatible with $\soiv[\permuts]$.
Similarly, a set of desirable gambles $\sodv\subseteq\gamblesfs$ is called \emph{(countably) exchangeable} if it is compatible with $\soiv[\permuts]$.
\end{definition}

This definition is closely related to its finite counterpart.

\begin{proposition}\label{prop:connection countable exchangeability and finite exchangeability}
Consider any coherent choice function $\cf$ on $\gamblesfs$.
Then $\cf$ is exchangeable if and only if for every choice of $\len$ in $\nats$, the $\ps^{\len}$-marginal $\cf_{\len}$ of $\cf$ is exchangeable.
Similarly, consider any coherent set of desirable gambles $\sodv\subseteq\gamblesfs$.
Then $\sodv$ is exchangeable if and only if for every choice of $\len$ in $\nats$, the $\ps^{\len}$-marginal $\sodv[\len]$ of $\sodv$ is exchangeable.
\end{proposition}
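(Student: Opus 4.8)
The plan is to deduce both statements directly from the two compatibility characterisations recalled in Section~\ref{sec:choice_indiff}: a choice function $\cf$ is compatible with a coherent set of indifferent gambles $\soiv$ exactly when $0\in\cf\group{\os}\iff\g\in\cf\group{\os}$ for every $\g$ in $\soiv$ and every $\os\supseteq\set{0,\g}$ in the relevant $\cfdom$, and a set of desirable gambles $\sodv$ is compatible with $\soiv$ exactly when $\sodv+\soiv\subseteq\sodv$. Before invoking these, I would record two elementary facts. First, the family $\soiv[{\permuts[\len]}]$ is non-decreasing in $\len$: if $\len\leq\len'$ then $\soiv[{\permuts[\len]}]\subseteq\soiv[{\permuts[\len']}]$, which one sees by identifying a gamble on $\ps^{\len}$ with its cylindrical extension and a permutation $\permut$ in $\permuts[\len]$ with the permutation in $\permuts[\len']$ fixing the coordinates $\len+1,\dots,\len'$; under this identification a generator $\g-\lift\g$ of $\soiv[{\permuts[\len]}]$ is literally a generator of $\soiv[{\permuts[\len']}]$. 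Second, every \emph{finite} set $\os$ of gambles of finite structure sits inside $\gambleson[\ps^{\len}]$ for all $\len$ large enough, namely any $\len$ dominating the finitely many variable counts of the finitely many members of $\os$.

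For the choice-function part I would prove the two implications separately. The forward one is immediate: assuming $\cf$ compatible with $\soiv[\permuts]$ and fixing $\len$, for any $\g$ in $\soiv[{\permuts[\len]}]\subseteq\soiv[\permuts]$ and any $\os\supseteq\set{0,\g}$ in $\cfdom\group{\gambleson[\ps^{\len}]}\subseteq\cfdom\group{\gamblesfs}$ we have $0\in\cf_{\len}\group{\os}=\cf\group{\os}\iff\g\in\cf\group{\os}=\cf_{\len}\group{\os}$, so $\cf_{\len}$ is compatible with $\soiv[{\permuts[\len]}]$. For the converse, assume every $\cf_{\len}$ is exchangeable, and take $\g$ in $\soiv[\permuts]$ and $\os\supseteq\set{0,\g}$ in $\cfdom\group{\gamblesfs}$; choosing $m$ with $\g\in\soiv[{\permuts[m]}]$ and then, by the second fact, $\len\geq m$ with $\os\subseteq\gambleson[\ps^{\len}]$, monotonicity gives $\g\in\soiv[{\permuts[\len]}]$, and compatibility of $\cf_{\len}$ with $\soiv[{\permuts[\len]}]$ yields $0\in\cf\group{\os}=\cf_{\len}\group{\os}\iff\g\in\cf_{\len}\group{\os}=\cf\group{\os}$; as $\g$ and $\os$ were arbitrary, $\cf$ is compatible with $\soiv[\permuts]$.

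The set-of-desirable-gambles part runs along the same lines with $\sodv+\soiv\subseteq\sodv$ replacing the indifference condition: from $\sodv+\soiv[\permuts]\subseteq\sodv$ and $\soiv[{\permuts[\len]}]\subseteq\soiv[\permuts]$ one gets $\sodv[\len]+\soiv[{\permuts[\len]}]\subseteq\sodv\cap\gambleson[\ps^{\len}]=\sodv[\len]$ for each $\len$; conversely, given $\g$ in $\sodv$ and $\g[g]$ in $\soiv[\permuts]$, pick $\len$ with $\g\in\gambleson[\ps^{\len}]$ (so $\g\in\sodv[\len]$) and $\g[g]\in\soiv[{\permuts[\len]}]$, whence $\g+\g[g]\in\sodv[\len]\subseteq\sodv$. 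I expect the only delicate point---and the one genuine obstacle---to be the cylindrical-extension identification underlying everything: one must check that the embedding of $\gambleson[\ps^{\len}]$ into $\gamblesfs$ respects sums, the orderings $\leq$, the lifted permutations $\lift$, and hence the sets $\soiv[{\permuts[\len]}]$, so that both ``taking the $\ps^{\len}$-marginal'' and ``passing to a common finite horizon $\len$'' are harmless; once this bookkeeping is settled, the proposition follows straight from the two compatibility characterisations.
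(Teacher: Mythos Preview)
Your proposal is correct and follows essentially the same route as the paper: both arguments reduce the choice-function part to the characterisation $0\in\cf\group{\os}\iff\g\in\cf\group{\os}$ for $\g\in\soiv$ and $\os\supseteq\set{0,\g}$, and then pass between the global and marginal models by embedding a finite option set into a common $\gambleson[\ps^{\len}]$. Your treatment is in fact slightly more careful---you make explicit the monotonicity $\soiv[{\permuts[\len]}]\subseteq\soiv[{\permuts[\len']}]$ for $\len\leq\len'$, which is implicitly used in the paper's sufficiency step---and you supply a direct argument for the desirable-gambles part where the paper simply cites an external reference, but the underlying ideas are the same.
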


\subsection{A representation theorem for countable sequences}
We will look for a similar representation result.
However, since we no longer deal with finite sequences of length $\len$, now the representing choice function won't be defined on $\polsond$, but instead on $\polson$.

\begin{center}
\footnotesize
\begin{tikzpicture}[scale=0.8]
\node (1) at (0,0.5) {$\gambleson[\ps^{\len}]$};
\node (2) at (-3,1.7) {$\gambleson[{\cvs[{\len}]}]$};
\node (3) at (3,1.7) {$\polsond$};
\node (4) at (0,-1) {$\qsg$};
\node (5) at (3,-1) {$\polson$};
\draw[->] (1) -- node[above right] {$\Hymap[\len]$} (2);
\draw[->] (1) -- node[above left] {$\Mnmap[\len]$} (3);
\draw[->] (1) -- node[right] {$\ec[\cdot]$} (4);
\draw[<->] (2) -- node[above] {$\CoMnmap[\len]$} (3);
\draw[<->] (3) -- node[below right] {$\Mnmape[\len]$} (4);
\draw[<->] (4) -- node[below left] {$\Hymape[\len]$} (2);
\draw[dashed,->] (3) -- (5);
\end{tikzpicture}
\end{center}
In the commuting diagram, a dashed line represents an embedding: indeed, for every $\len$ in $\nats$, $\polsond$ is a subspace of $\polson$.
That shows the importance of the polynomial representation.

As we have seen, in order to define coherent choice functions on some linear space, we need to provide it with a vector ordering.
Similar to what we did before, we use the proper cone $\set{0}\cup\Posi\group{\cset{\bern}{\cts\in\cvs[\len],n\in\nats}}$ to define the order $\vo[{\bern[]}]$ on $\polson$:
\begin{equation*}
\pol[h_1]\vo[{\bern[]}]\pol[h_2]
\iff
\pol[h_2]-\pol[h_1]\in\set{0}\cup\Posi\group{\cset{\bern}{\cts\in\cvs[\len],n\in\nats}}
\end{equation*}
for all $\pol[h_1]$ and $\pol[h_2]$ in $\polson$.

Keeping Props.~\ref{prop:marginalisation preserves coherence} and~\ref{prop:coherence marginalisation finitary} in mind, the following result is not surprising.

\begin{proposition}\label{prop:bernstein coherence choice functions}
Consider any choice function $\cftoo$ on $\polson$.
Then $\cftoo$ is coherent if and only if for every $\len$ in $\nats$ the choice function $\cftoo_n$, given by $\cftoo_n\group{\os}\coloneqq\cftoo\group{\os}$ for all $\os$ in $\cfdom\group{\polsond}$ is coherent.
\end{proposition}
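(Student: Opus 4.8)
The plan is to reprise the argument behind Proposition~\ref{prop:coherence marginalisation finitary} (together with its easy converse), exploiting that $\polson=\bigcup_{n\in\nats}\polsond$ is an increasing union of linear subspaces and that every $\os\in\cfdom\group{\polson}$, being finite, already lies in some $\polsond$, so that $\cftoo$ and $\cftoo_n$ coincide on $\cfdom\group{\polsond}$ once $n$ is large enough. For most of the coherence axioms this turns the equivalence into bookkeeping: Axioms~\ref{coh cf 1: irreflexivity} and~\ref{coh cf 3} mention only the choice function, while Axiom~\ref{coh cf 4} involves only the linear structure, which $\polsond$ inherits as a subspace; so each of these transfers between $\cftoo$ and the $\cftoo_n$ by choosing an $n$ with every option set in sight (and, for~\ref{coh cf 4b: independence}, the translating option) inside $\polsond$ and invoking the matching axiom for the other. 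The one axiom that mentions the vector ordering is Axiom~\ref{coh cf 2: non-triviality}, and it forces a comparison between the orders $\vo[{\bern[]}]$ on $\polson$ and $\vo[{\bern[]}]^n$ on $\polsond$.

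For that comparison I would set $\mathcal{K}_n\coloneqq\set{0}\cup\Posi\group{\cset{\bern}{\cts\in\cvs[\len]}}$, the cone defining $\vo[{\bern[]}]^n$, and $\mathcal{K}\coloneqq\set{0}\cup\Posi\group{\cset{\bern}{\cts\in\cvs[\len],n\in\nats}}$, the cone defining $\vo[{\bern[]}]$. Clearly $\mathcal{K}_n\subseteq\mathcal{K}$. For a bounded-degree reverse inclusion I would invoke degree elevation: on $\simplex$, every Bernstein gamble of degree $m$ is a \emph{positive} linear combination of Bernstein gambles of degree $m+1$ (see~\cite{cooman2006d,debock2014}); iterating, $\mathcal{K}_m\subseteq\mathcal{K}_n$ whenever $m\leq n$, so $(\mathcal{K}_n)_{n\in\nats}$ is an increasing chain with $\mathcal{K}=\bigcup_{n\in\nats}\mathcal{K}_n$. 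This yields two facts: for $h_1,h_2\in\polsond$, $h_1\svo[{\bern[]}]^n h_2$ implies $h_1\svo[{\bern[]}] h_2$; and if $h_1\svo[{\bern[]}] h_2$ in $\polson$, then $h_2-h_1$ is a positive combination of finitely many Bernstein gambles, so any $n$ at least as large as the degrees of $h_1$, $h_2$ and of all those Bernstein gambles has $h_1,h_2\in\polsond$ and $h_1\svo[{\bern[]}]^n h_2$. I expect this cone comparison --- in effect just the positivity of degree elevation --- to be the one genuinely non-formal ingredient; it cannot be bypassed by a purely formal ``increasing union of ordered subspaces'' argument, since $\vo[{\bern[]}]^n$ is strictly coarser than the restriction of $\vo[{\bern[]}]^{n+1}$ to $\polsond$.

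For the ``only if'' direction, suppose $\cftoo$ is coherent and fix $n$. Axioms~\ref{coh cf 1: irreflexivity} and~\ref{coh cf 3} for $\cftoo_n$ are immediate from $\cftoo_n=\cftoo$ on $\cfdom\group{\polsond}$; Axiom~\ref{coh cf 4} for $\cftoo_n$ holds because $\polsond$, being a linear subspace, is closed under positive scalar multiples and under translation by its own elements, and because $\polsond\subseteq\polson$; and Axiom~\ref{coh cf 2: non-triviality} for $\cftoo_n$ follows from the first fact above, since $h_1\svo[{\bern[]}]^n h_2$ then gives $h_1\svo[{\bern[]}] h_2$, whence $\set{h_2}=\cftoo\group{\set{h_1,h_2}}=\cftoo_n\group{\set{h_1,h_2}}$.

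For the ``if'' direction, suppose every $\cftoo_n$ is coherent. Since any $\os\in\cfdom\group{\polson}$ lies in some $\cfdom\group{\polsond}$, where $\cftoo\group{\os}=\cftoo_n\group{\os}$, Axioms~\ref{coh cf 1: irreflexivity}, \ref{coh cf 3} and~\ref{coh cf 4} for $\cftoo$ follow by taking $n$ large enough that all the option sets in the hypothesis (and, for~\ref{coh cf 4b: independence}, the translating option) lie in $\polsond$ and applying the corresponding axiom for $\cftoo_n$. Axiom~\ref{coh cf 2: non-triviality} for $\cftoo$ follows from the second fact above: given $h_1\svo[{\bern[]}] h_2$, choose $n$ as there, so that $h_1,h_2\in\polsond$ and $h_1\svo[{\bern[]}]^n h_2$, and apply Axiom~\ref{coh cf 2: non-triviality} for $\cftoo_n$. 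This settles both directions.
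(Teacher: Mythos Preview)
Your proof is correct and follows the same overall strategy as the paper's: for each axiom, choose $n$ large enough that every option set (and, for Axiom~\ref{coh cf 4b: independence}, the translating option) lies in $\polsond$, and invoke the corresponding axiom for $\cftoo_n$.  The paper only writes out the ``if'' direction, calling necessity trivial; you supply both.

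The one substantive difference is your treatment of Axiom~\ref{coh cf 2: non-triviality} in the ``if'' direction.  You explicitly invoke degree elevation to see that the cones $\mathcal{K}_n$ are nested with union $\mathcal{K}$, and you take $n$ at least as large as the degrees of $h_1$, $h_2$ \emph{and of all the Bernstein gambles occurring in the positive combination witnessing} $h_2-h_1\in\mathcal{K}$.  The paper instead takes $n=\max\set{\deg h_1,\deg h_2}$ and passes directly from $h_2-h_1\in\polsond$ to $h_2-h_1\in\Posi\group{\cset{\bern}{\cts\in\cvs[\len]}}$.  Your caution is warranted: in general $\polsond\cap\mathcal{K}$ is strictly larger than $\mathcal{K}_n$ (for $\abs{\ps}=2$, take $h(\theta)=3\theta^2-3\theta+1$, which is strictly positive on $[0,1]$, hence in $\mathcal{K}$ by Bernstein-type positivity, but has a negative degree-$2$ Bernstein coefficient).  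So your explicit appeal to degree elevation, with the larger $n$, is precisely the argument that makes this step go through.
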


\begin{theorem}[Countable Representation]\label{theorem:countable exchangeability representation}
Consider any choice function $\cf$ on $\gamblesfs$.
Then $\cf$ is exchangeable if and only if there is a unique representing choice function $\rcf$ on $\polson$ such that, for every $\len$ in $\nats$, the $\ps^{\len}$-marginal $\cf_{\len}$ of $\cf$ is determined by
\begin{equation*}
\cf_{\len}\group{\os}
=
\cset{\g\in\os}{\Mnmap[\len]\group{\g}\in\rcf\group{\Mnmap[\len]\group{\os}}}
\text{ for all $\os$ in $\cfdom\group{\gambleson[\ps^{\len}]}$.}
\end{equation*}
Furthermore, in that case, $\rcf$ is given by $\rcf\group{\os}\coloneqq\bigcup_{\len\in\nats}\rcf_{\len}\group{\os\cap\polsond}$ for all $\os$ in $\cfdom\group{\polson}$,
with $\rcf_{\len}\group{\Mnmap[\len]\group{\os}}\coloneqq\Mnmap[\len]\group{\cf_{\len}\group{\os}}$ for every $\os$ in $\cfdom\group{\gambleson[\ps^{\len}]}$, and where we let $\rcf_{\len}\group{\emptyset}\coloneqq\emptyset$ for notational convenience.
Finally, $\cf$ is coherent if and only if\/ $\rcf$ is.

Similarly, consider any set of desirable gambles $\sodv\subseteq\gamblesfs$.
Then $\sodv$ is exchangeable if and only if there is a unique representing $\tilde{\sodv}\subseteq\polson$ such that, for every $\len$ in $\nats$, the $\ps^{\len}$-marginal $\sodv[\len]$ is given by $\sodv[\len]=\bigcup\Mnmapei[\len]\group{\tilde{\sodv}\cap\polsond}$.
Furthermore, in that case, $\tilde{\sodv}$ is given by $\tilde{\sodv}=\bigcup_{\len\in\nats}\Mnmap[\len]\group{\sodv[\len]}$.
Finally, $\sodv$ is coherent if and only if $\tilde{\sodv}$ is.
\end{theorem}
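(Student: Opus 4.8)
The plan is to deduce the Countable Representation Theorem from its finite counterpart (Theorem~\ref{theorem:finite exchangeability representation polynomial}) by gluing the marginal representations together across all $\len\in\nats$, using the marginalisation machinery of Props.~\ref{prop:marginalisation preserves coherence}, \ref{prop:coherence marginalisation finitary}, \ref{prop:connection countable exchangeability and finite exchangeability} and~\ref{prop:bernstein coherence choice functions}. I would treat the choice-function statement in detail and then indicate the (entirely analogous, and in part already available in~\cite{debock2014}) argument for sets of desirable gambles.

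\textbf{Choice functions, necessity.} Suppose $\cf$ on $\gamblesfs$ is exchangeable. By Prop.~\ref{prop:connection countable exchangeability and finite exchangeability}, each $\ps^{\len}$-marginal $\cf_{\len}$ is exchangeable on $\gambleson[\ps^{\len}]$, so Theorem~\ref{theorem:finite exchangeability representation polynomial} gives a unique representing $\rcf_{\len}$ on $\polsond$ with $\cf_{\len}\group{\os}=\cset{\g\in\os}{\Mnmap[\len]\group{\g}\in\rcf_{\len}\group{\Mnmap[\len]\group{\os}}}$ and $\rcf_{\len}\group{\Mnmap[\len]\group{\os}}=\Mnmap[\len]\group{\cf_{\len}\group{\os}}$. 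Now define $\rcf$ on $\polson$ by the stated formula $\rcf\group{\os}\coloneqq\bigcup_{\len\in\nats}\rcf_{\len}\group{\os\cap\polsond}$. The key point to check is that this $\rcf$ is well defined and that its own $n$-marginal (in the sense of Prop.~\ref{prop:bernstein coherence choice functions}) recovers $\rcf_{\len}$: for $\os\in\cfdom\group{\polsond}$ one has $\os\cap\polsond[\simplex][m]=\os$ for all $m\geq\len$, and since $\rcf_m$ restricts to $\rcf_{\len}$ on $\cfdom\group{\polsond}$ (this restriction property is itself the uniqueness clause of Theorem~\ref{theorem:finite exchangeability representation polynomial} applied to $\cf_m$ versus $\cf_{\len}$, because both marginals of $\cf$ agree on $\gambleson[\ps^{\len}]$), the union collapses to $\rcf_{\len}\group{\os}$. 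Then, tracing through $\Mnmap[\len]$ and using $\os\subseteq\gambleson[\ps^{\len}]$ so that $\Mnmap[\len]\group{\os}\subseteq\polsond$, one reads off $\cf_{\len}\group{\os}=\cset{\g\in\os}{\Mnmap[\len]\group{\g}\in\rcf\group{\Mnmap[\len]\group{\os}}}$, which is the displayed characterisation.

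\textbf{Choice functions, sufficiency and coherence.} Conversely, given any $\rcf$ on $\polson$ satisfying the displayed formula for every $\len$, the same formula exhibits $\cf_{\len}$ as represented (via the linear order isomorphism $\Mnmap[\len]$, Prop.~\ref{prop:cvs and polsond are order isomorphic} composed with $\Hymap[\len]$) by the $n$-marginal of $\rcf$, hence $\cf_{\len}$ is exchangeable for every $\len$; Prop.~\ref{prop:connection countable exchangeability and finite exchangeability} then yields that $\cf$ is exchangeable. Uniqueness of $\rcf$ follows from the uniqueness of each $\rcf_{\len}$ together with Prop.~\ref{prop:bernstein coherence choice functions}'s bookkeeping: two candidates agreeing with $\cf$ have the same $n$-marginals for all $\len$, hence are equal. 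For the coherence equivalence, chain the available facts: $\cf$ coherent $\iff$ every $\cf_{\len}$ coherent (Props.~\ref{prop:marginalisation preserves coherence} and~\ref{prop:coherence marginalisation finitary}) $\iff$ every $\rcf_{\len}$ coherent (Theorem~\ref{theorem:finite exchangeability representation polynomial}, since $\Mnmap[\len]$ is a coherence-preserving isomorphism) $\iff$ $\rcf$ coherent (Prop.~\ref{prop:bernstein coherence choice functions}).

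\textbf{Sets of desirable gambles.} The argument mirrors the above: exchangeability of $\sodv$ is equivalent to exchangeability of each $\sodv[\len]=\sodv\cap\gambleson[\ps^{\len}]$ (Prop.~\ref{prop:connection countable exchangeability and finite exchangeability}), which by Theorem~\ref{theorem:finite exchangeability representation polynomial} is equivalent to the existence of a unique $\tilde{\sodv}_{\len}=\Mnmap[\len]\group{\sodv[\len]}\subseteq\polsond$ with $\sodv[\len]=\bigcup\Mnmapei[\len]\group{\tilde{\sodv}_{\len}}$. Setting $\tilde{\sodv}\coloneqq\bigcup_{\len\in\nats}\Mnmap[\len]\group{\sodv[\len]}$, one checks $\tilde{\sodv}\cap\polsond=\tilde{\sodv}_{\len}$ (using that $\Mnmap[m]$ restricted to $\gambleson[\ps^{\len}]$ equals $\Mnmap[\len]$ for $m\geq\len$, by the nesting of the diagrams) so that $\sodv[\len]=\bigcup\Mnmapei[\len]\group{\tilde{\sodv}\cap\polsond}$, and coherence transfers level by level exactly as for choice functions; the sets-of-desirable-gambles half of the coherence step is also covered by~\cite[Proposition~4]{debock2014}. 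Uniqueness of $\tilde{\sodv}$ again reduces to uniqueness at each finite level.

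\textbf{Main obstacle.} The only genuinely delicate point is the compatibility of the finite representations across levels: one must verify that $\Mnmap[m]$ extends $\Mnmap[\len]$ for $m\geq\len$ (equivalently, that the Bernstein-gamble bookkeeping is consistent with the cylindrical-extension identification of $\gambleson[\ps^{\len}]$ inside $\gambleson[\ps^m]$), so that the piecewise-defined $\rcf$ and $\tilde{\sodv}$ are coherent objects on $\polson$ with the prescribed marginals. This is the content of~\cite{cooman2006d,debock2014} on the embedding $\polsond\subseteq\polson$, and once it is in hand the rest is a routine diagram chase.
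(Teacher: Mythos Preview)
Your proposal is correct and follows essentially the same route as the paper: reduce to the finite marginals via Prop.~\ref{prop:connection countable exchangeability and finite exchangeability}, apply Theorem~\ref{theorem:finite exchangeability representation polynomial} at each level, glue using the union formula, and transfer coherence through Props.~\ref{prop:marginalisation preserves coherence}, \ref{prop:coherence marginalisation finitary} and~\ref{prop:bernstein coherence choice functions}. If anything, you are more explicit than the paper about the cross-level compatibility (that $\rcf_m$ restricts to $\rcf_{\len}$ on $\cfdom\group{\polsond}$, and that $\Mnmap[m]$ extends $\Mnmap[\len]$ under cylindrical extension), which the paper's proof uses but does not spell out.
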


\section{Conclusion}
We studied exchangeability and we have found counterparts to de Finetti's finite and countable representation results, in the general setting of choice functions.
We have shown that an exchangeability assessment is a particular indifference assessment, where we identified the set of indifferent options.
The main idea that made (finite) representation possible is the linear order isomorphism $\Hymapei[\len]$ between the quotient space and the set of gambles on count vectors, indicating that (finitely) exchangeable choice functions can be represented by a choice function that essentially represents preferences between urns with $\len$ balls of types $\ps$ with different compositions $\cts$.
Alternatively, for the countable case, we have shown that there is a polynomial representation.

Choice functions form a belief structure (see~\cite{Vancamp2017}).
Therefore, any infimum of coherent choice functions is a coherent choice function itself.
Since any infimum of choice functions compatible with some fixed set of indifferent options $\soiv$, is compatible with $\soiv$ as well (see~\cite{Vancamp2017}), our results indicate that, using choice functions, it is conceptually easy to reason about exchangeable sequences: infima of exchangeable and coherent choice functions will be exchangeable and coherent as well.

A possible future goal is to investigate how exchangeability behaves under updating.
It is shown, in~\cite{cooman2010}, that, for exchangeable sets of desirable gambles, updating can be done directly for the representing set of desirable gambles in the count space.
We expect this to be the case for choice functions as well.

\section*{Acknowledgments}
Gert de Cooman's research was partly funded through project number 3G012512 of the Research Foundation Flanders (FWO).
The authors would like to thank Enrique Miranda for his comments on a draft of the paper.

\vskip 0.2in

\appendix\newpage
\section{Proofs of some results}


\begin{proof}[Proof of Prop.~\ref{prop:soiv finite exchangeability is coherent}]
For Axiom~\ref{coh soiv 1: 0 is indifferent}, since $\soiv[{\permuts[\len]}]$ is a linear span, $\vect[0]$ is included in $\soiv[{\permuts[\len]}]$.
For Axiom~\ref{coh soiv 2: positive or negative vectors are not indifferent}, consider any $\g$ in $\soiv[{\permuts[\len]}]$ and assume \emph{ex absurdo} that $\g\in\posgambleson[\ps^{\len}]\cup\neggambleson[\ps^{\len}]$.
If $\g\in\posgambleson[\ps^{\len}]$ then $\lift\g\in\posgambleson[\ps^{\len}]$ for all $\permut$ in $\permuts[\len]$, and therefore $\ex\group{\g}>0$, a contradiction with Prop.~\ref{prop:ex properties}\ref{it:ex property:kernel}.
If $\g\in\neggambleson[\ps^{\len}]$ then, similarly $\ex\group{\g}<0$, again a contradiction with Prop.~\ref{prop:ex properties}\ref{it:ex property:kernel}.
Axioms~\ref{coh soiv 3: scaling is indifferent} and~\ref{coh soiv 4: sum is indifferent} are satisfied because $\soiv[{\permuts[\len]}]$ is a linear span.
\end{proof}

\begin{proof}[Proof of Prop.~\ref{prop:characterisation qsg by Hy}]
Infer the following equivalences.
Start with $\ec[\g]=\ec[{\g[g]}]$, what, due to the definition of equivalence classes, is equivalent to $\g+\g[h_1]=\g[g]+\g[h_2]$, and equivalently, $\g[g]-\g=\g[h_1]-\g[h_2]$, for some $\g[h_1]$ and $\g[h_2]$ in $\soiv[{\permuts[\len]}]$.
In turn, since $\soiv[{\permuts[\len]}]$ is a linear space, that is equivalent to $\g[g]-\g\in\soiv[{\permuts[\len]}]$.
Because $\ker\group{\ex}=\soiv[{\permuts[\len]}]$ [by Proposition~\ref{prop:ex properties}\ref{it:ex property:kernel}], we find equivalently that $\ex\group{\g[g]-\g}=0$, and, due to the linearity of $\ex$, equivalently $\ex\group{\g}=\ex\group{\g[g]}$.
Use Lemma~\ref{lemma:relation ex and Hymap} to find that, indeed, equivalently $\Hymap[\len]\group{\g}=\Hymap[\len]\group{\g[g]}$.
\end{proof}

\begin{lemma}\label{lemma:relation ex and Hymap}
Consider any $\g$ and $\g[g]$ in $\gambleson[\ps^{\len}]$.
Then $\ex\group{\g}=\ex\group{\g[g]}$ if and only if\/ $\Hymap[\len]\group{\g}=\Hymap[\len]\group{\g[g]}$.
\end{lemma}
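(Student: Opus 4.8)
The plan is to read the lemma straight off the factorisation~\eqref{eq:relation ex and Hy} of $\ex$ through the permutation invariant atoms, so that essentially no new work is needed. First I would invoke the linearity of $\ex$ (Proposition~\ref{prop:ex properties}) together with~\eqref{eq:relation ex and Hy} to write the difference of the two averages as a single linear combination of atom indicators:
\begin{equation*}
\ex\group{\g}-\ex\group{\g[g]}
=
\sum_{\cts\in\cvs[\len]}\bigl(\Hy[\len][\g][\cts]-\Hy[\len][{\g[g]}][\cts]\bigr)\ind[{\pia[\cts]}].
\end{equation*}
Then I would argue that this gamble is the zero gamble if and only if every coefficient $\Hy[\len][\g][\cts]-\Hy[\len][{\g[g]}][\cts]$ equals $0$. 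One direction is trivial; for the other, note that since $\cset{\pia[\cts]}{\cts\in\cvs[\len]}$ partitions $\ps^{\len}$ into \emph{non-empty} parts, evaluating the right-hand side at any $\rval$ with $\cm\group{\rval}=\cts$ returns exactly the single coefficient indexed by $\cts$; hence the indicators $\ind[{\pia[\cts]}]$, $\cts\in\cvs[\len]$, are linearly independent in $\gambleson[\ps^{\len}]$, and the displayed gamble vanishes precisely when all coefficients do.

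The second step is merely to recognise, from the very definition~\eqref{eq:def Hymap} of the map $\Hymap[\len]$, that ``$\Hy[\len][\g][\cts]=\Hy[\len][{\g[g]}][\cts]$ for all $\cts$ in $\cvs[\len]$'' is just a restatement of $\Hymap[\len]\group{\g}=\Hymap[\len]\group{\g[g]}$, since by construction $\Hymap[\len]\group{\g}$ is the gamble on $\cvs[\len]$ whose value at each $\cts$ is $\Hy[\len][\g][\cts]$. Chaining the two equivalences then yields $\ex\group{\g}=\ex\group{\g[g]}\iff\Hymap[\len]\group{\g}=\Hymap[\len]\group{\g[g]}$, as claimed.

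I do not expect any genuine obstacle here; the single point worth a sentence of care is the non-emptiness of each atom $\pia[\cts]$, which is exactly why $\cvs[\len]$ was introduced as the \emph{range} of $\cm$ rather than as some larger index set --- without it the ``if'' direction would break, as the coefficients could no longer be recovered by pointwise evaluation. If one prefers to avoid the linear-independence phrasing, the same argument runs directly: $\ex\group{\g}$ is permutation invariant by Proposition~\ref{prop:ex properties}\ref{it:ex property:invar}, hence constant on each $\pia[\cts]$, and~\eqref{eq:relation ex and Hy} identifies that constant value as $\Hy[\len][\g][\cts]$; so $\ex\group{\g}$ and $\ex\group{\g[g]}$ agree if and only if these constants agree for every $\cts$, i.e.\ if and only if $\Hymap[\len]\group{\g}=\Hymap[\len]\group{\g[g]}$.
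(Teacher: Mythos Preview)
Your proposal is correct and follows essentially the same route as the paper: both use Eq.~\eqref{eq:relation ex and Hy} to rewrite $\ex(f)=\ex(g)$ as equality of the sums $\sum_{\cts}\Hy[\len][\cdot][\cts]\ind[{\pia[\cts]}]$, then pass to coefficientwise equality and invoke the definition~\eqref{eq:def Hymap} of $\Hymap[\len]$. Your write-up is a bit more explicit in justifying the coefficient-extraction step via the partition of $\ps^{\len}$ into non-empty atoms, which the paper leaves implicit.
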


\begin{proof}[Proof of Lemma~\ref{lemma:relation ex and Hymap}]
Infer the following equivalences.
Start with $\ex\group{\g}=\ex\group{\g[g]}$, what, by Eq.~\eqref{eq:relation ex and Hy}, is equivalent to $\sum_{\cts\in\cvs[\len]}\Hy[\len][\g][\cts]\ind[{\pia[\cts]}]=\sum_{\cts\in\cvs[\len]}\Hy[\len][{\g[g]}][\cts]\ind[{\pia[\cts]}]$.
Equivalently, we find that $\Hy[\len][\g][\cts]=\Hy[\len][{\g[g]}][\cts]$ for all $\cts$ in $\cvs[\len]$.
In turn, by Eq.~\eqref{eq:def Hymap}, that is equivalent to $\Hymap[\len]\group{\g}=\Hymap[\len]\group{\g[g]}$.
\end{proof}

\begin{proof}[Proof of Prop.~\ref{prop:Hymap inverse}]
This proof is structured as follows: we show that (i) $\Hymapei[\len]\circ\Hymape[\len]=\id[\qsg]$, and (ii) $\Hymape[\len]\circ\Hymapei[\len]=\id[{\gambleson[{\cvs[\len]}]}]$, together implying that $\Hymape[\len]$ and $\Hymapei[\len]$ are each other's inverses.

For (i), consider any $\tg$ in $\qsg$.
We need to show that then $\Hymapei[\len]\group{\Hymape[\len]\group{\tg}}=\tg$.
Let $\g[h]$ be an arbitrary element of $\tg$, and $\g\coloneqq\ex\group{\g[h]}$.
Then $\ex\group{\g}=\ex\group{\g[h]}$ by Prop.~\ref{prop:ex properties}\ref{it:ex property:proj}, and therefore, using Lemma~\ref{lemma:relation ex and Hymap}, $\Hymap[\len]\group{\g}=\Hymap[\len]\group{\g[h]}$, so Prop.~\ref{prop:characterisation qsg by Hy} implies that $\g\in\tg$ as well.
Then $\Hymape[\len]\group{\tg}$ assumes the value $\Hymap[\len]\group{\g}\group{\cts}=\frac{1}{\binom{\len}{\cts}}\sum_{y\in\pia[\cts]}\g\group{y}$ on every $\cts$ in $\cvs[\len]$.
But $\g$ is constant on every permutation atom $\pia[\cts]$, so $\frac{1}{\binom{\len}{\cts}}\sum_{y\in\pia[\cts]}\g\group{y}=\frac{1}{\binom{\len}{\cts}}\abs{\pia[\cts]}\g\group{\rval}=\g\group{\rval}$ for every $\rval$ in $\pia[\cts]$, and therefore
\begin{equation}\label{eq:proof:Hymap inverse}
f
=
\sum_{\cts\in\cvs[\len]}\Hymap\group{\g}\group{\cts}\ind[{\pia[\cts]}]
=
\sum_{\cts\in\cvs[\len]}\Hymape\group{\tg}\group{\cts}\ind[{\pia[\cts]}].
\end{equation}
Then indeed $\Hymapei[\len]\group{\Hymape[\len]\group{\tg}}=\ec[{\sum_{\cts\in\cvs[\len]}\Hymape\group{\tg}\group{\cts}\ind[{\pia[\cts]}]}]=\ec[\g]=\tg$, where the first equality follows from Eq.~\eqref{eq:Hymap inverse}, the second one from Eq.~\eqref{eq:proof:Hymap inverse}, and the last one from the fact that $\g\in\tg$ and $\ec[\g]=\set{\g}+\soiv[{\permuts[\len]}]$.

For (ii), consider any $\g$ in $\gambleson[{\cvs[\len]}]$.
We need to show that then $\Hymape[\len]\group{\Hymapei[\len]\group{\g}}=\g$.
Let $\g[g]\coloneqq\sum_{\cts\in\cvs[\len]}\g\group{\cts}\ind[{\pia[\cts]}]$, being a gamble on $\ps^{\len}$.
Then $\Hymapei[\len]\group{\g}=\ec[{\g[g]}]$ by Eq.~\eqref{eq:Hymap inverse}, so $\Hymape[\len]\group{\Hymapei[\len]\group{\g}}=\Hymape[\len]\group{\ec[\g[g]]}$, and since $\g[g]\in\ec[\g[g]]$, we find using Eq.~\eqref{eq:def Hymap on qsg} that $\Hymape[\len]\group{\ec[\g[g]]}=\Hymap[\len]\group{\g[g]}$ and therefore $\Hymape[\len]\group{\Hymapei[\len]\group{\g}}=\Hymap[\len]\group{\g[g]}$.
The proof is finished if we show that $\Hymap[\len]\group{\g[g]}=\g$.
Consider any $\cts'$ in $\cvs[\len]$, and infer that
\begin{align*}
\Hymap[\len]\group{\g[g]}\group{\cts'}
=
\frac{1}{\binom{\len}{\cts'}}\sum_{y\in\pia[\cts']}\g[g]\group{y}
&=
\frac{1}{\binom{\len}{\cts'}}\sum_{y\in\pia[\cts']}\sum_{\cts\in\cvs[\len]}\g\group{\cts}\ind[{\pia[\cts]}]\group{y}\\
&=
\sum_{\cts\in\cvs[\len]}\g\group{\cts}\frac{1}{\binom{\len}{\cts'}}\sum_{y\in\pia[\cts']}\ind[{\pia[\cts]}]\group{y}
=
\sum_{\cts\in\cvs[\len]}\g\group{\cts}\ind[\set{\cts}]\group{\cts'}
=
\g\group{\cts'},
\end{align*}
where the first equality follows from Eq.~\eqref{eq:def Hy} and the penultimate from the fact that
\begin{equation*}
\frac{1}{\binom{\len}{\cts'}}\sum_{y\in\pia[\cts']}\ind[{\pia[\cts]}]\group{y}
=
\begin{cases}
1&\text{if $\cts'=\cts$,}\\
0&\text{otherwise.}
\end{cases}
\end{equation*}
Therefore indeed $\Hymape[\len]\group{\Hymapei[\len]\group{\g}}=\Hymap[\len]\group{\g[g]}=\g$.
\end{proof}

\begin{proof}[Proof of Prop.~\ref{prop:vector ordering:Hymap}]
For necessity, assume that $\tg\vo\tg[g]$.
Then, by Eq.~\eqref{eq:vector ordering:lexicographic}, $\g\leq\g[g]$ for some $\g$ in $\tg$ and $\g[g]$ in $\tg[g]$.
Consider any $\cts$ in $\cvs[\len]$, and infer that $\Hymap[\len]\group{\g}\group{\cts}=\frac{1}{\binom{\len}{\cts}}\sum_{y\in\pia[\cts]}\g\group{y}\leq\frac{1}{\binom{\len}{\cts}}\sum_{y\in\pia[\cts]}\g[g]\group{y}=\Hymap[\len]\group{\g[g]}\group{\cts}$.
Then $\Hymap[\len]\group{\g}\leq\Hymap[\len]\group{\g[g]}$, and therefore, by Eq.~\eqref{eq:def Hymap on qsg}, indeed $\Hymape[\len]\group{\tg}\leq\Hymape[\len]\group{\tg[g]}$.

For sufficiency, assume that $\Hymape[\len]\group{\tg}\leq\Hymape[\len]\group{\tg[g]}$.
Then, by Eq.~\eqref{eq:def Hymap on qsg} and Prop.~\ref{prop:characterisation qsg by Hy}, $\Hymap[\len]\group{\g}\leq\Hymap[\len]\group{\g[g]}$ for all $\g$ in $\tg$ and $\g[g]$ in $\tg[g]$.
Consider any $\g$ in $\tg$ and $\g[g]$ in $\tg[g]$ and let $\g'\coloneqq\ex\group{\g}$ and $\g[g]'\coloneqq\ex\group{\g[g]}$.
Then $\ex\group{\g'}=\ex\group{\g}$ and $\ex\group{\g[g]'}=\ex\group{\g[g]}$ by Prop.~\ref{prop:ex properties}\ref{it:ex property:proj}, so Lemma~\ref{lemma:relation ex and Hymap} and Prop.~\ref{prop:characterisation qsg by Hy} together imply that $\g'\in\tg$ and $\g[g]'\in\tg$, so $\Hymap[\len]\group{\g'}\leq\Hymap[\len]\group{\g[g]'}$.
Then, by Eqs.~\eqref{eq:def Hymap} and~\eqref{eq:def Hy}, $\frac{1}{\binom{\len}{\cts}}\sum_{y\in\pia[\cts]}\g'\group{y}\leq\frac{1}{\binom{\len}{\cts}}\sum_{y\in\pia[\cts]}\g[g]'\group{y}$ for every $\cts$ in $\cvs[\len]$.
But $\g'$ and $\g[g]'$ are constant on every $\pia[\cts]$, so $\g'\group{y}\leq\g[g]'\group{y}$ for every $y$ in $\pia[\cts]$ and every $\cts$ in $\cvs[\len]$.
Then $\g'\leq\g[g]'$, and therefore indeed $\tg\vo\tg[g]$.
\end{proof}

\begin{proof}[Proof of Theorem~\ref{theorem:finite exchangeability representation}]
We begin with the representation of choice functions.
For the first statement, note that $\cf$ is exchangeability is equivalent to compatibility with $\soiv[{\permuts[\len]}]$ [by Def.~\ref{def:exchangeable choice function}], and equivalently, there is some representing choice function $\cftoo$ on $\qsg$ such that $\cf\group{\os}=\cset{\g\in\os}{\ec[\g]\in\cftoo\group{\osin}}$ for all $\os$ in $\cfdom\group{\gambleson[\ps^{\len}]}$.
We use the linear order isomorphism $\Hymape[\len]$ to define a choice function $\rcf$ on $\gambleson[{\cvs[\len]}]$: we let $\ec[\g]\in\cftoo\group{\osin}\iff\Hymape[\len]\group{\ec[\g]}\in\rcf\group{\Hymape[\len]\group{\osin}}$ for all 
$\g$ in $\gambleson[\ps^{\len}]$ and $\os$ in $\cfdom\group{\gambleson[\ps^{\len}]}$.
Since $\g\in\ec[\g]$, use Prop.~\ref{prop:characterisation qsg by Hy} and Eq.~\eqref{eq:def Hymap on qsg} to infer that $\Hymape[\len]\group{\ec[\g]}=\Hymap[\len]\group{\g}$.
Similarly, infer that $\Hymape[\len]\group{\osin}=\cset{\Hymape[\len]\group{\ec[\g[g]]}}{\g[g]\in\os}=\cset{\Hymap[\len]\group{\g[g]}}{\g[g]\in\os}=\Hymap[\len]\group{\os}$, so $\ec[\g]\in\cftoo\group{\osin}\iff\Hymap[\len]\group{\g}\in\rcf\group{\Hymap[\len]\group{\os}}$.
Then indeed
\begin{equation*}
\cf\group{\os}
=
\cset{\g\in\os}{\Hymap[\len]\group{\g}\in\rcf\group{\Hymap[\len]\group{\os}}}
\text{ for all $\os$ in $\cfdom\group{\gambleson[\ps^{\len}]}$.}
\end{equation*}
To show that $\rcf$ is unique, use that $\cftoo$ is unique and $\Hymape[\len]$ is a bijection to infer that $\rcf$ is unique too.

For the second statement, consider any $\os$ in $\cfdom\group{\gambleson[{\ps^{\len}}]}$ and infer, using the definition of $\rcf$, that $\rcf\group{\Hymape[\len]\group{\osin}}=\Hymape[\len]\group{\cftoo\group{\osin}}$ , and therefore $\rcf\group{\Hymap[\len]\group{\os}}=\Hymape[\len]\group{\cftoo\group{\osin}}$.
Since $\cftoo$ is given by $\cftoo\group{\osin}=\osin[{\cf\group{\os}}]$, we find $\Hymape[\len]\group{\cftoo\group{\osin}}=\Hymape[\len]\group{\osin[{\cf\group{\os}}]}=\Hymap[\len]\group{\cf\group{\os}}$, and therefore indeed $\rcf\group{\Hymap[\len]\group{\os}}=\Hymap[\len]\group{\cf\group{\os}}$.

For the third statement, by the compatibility with $\soiv[{\permuts[\len]}]$ guarantees that $\cf$ is coherent if and only if $\cftoo$ on $\qsg$ is coherent.
But since $\rcf$ is defined from $\cftoo$ using the linear order isomorphism $\Hymap[\len]$, we have immediately that $\rcf$ is coherent if and only if $\cftoo$ is coherent.
Therefore indeed $\cf$ is coherent if and only if $\rcf$ is coherent.

We now turn to the representation for sets of desirable gambles.
Since $\sodv$ is compatible with $\soiv[{\permuts[\len]}]$, there is some representing set of desirable gambles $\sodv'\subseteq\qsg$ such that $\sodv=\bigcup\sodv'$.
Using the linear order isomorphism $\Hymape[\len]$ we can transform $\sodv'$ to $\tilde{\sodv}\coloneqq\Hymape[\len]\group{\sodv'}$ on the isomorphic space $\gambleson[{\cvs[\len]}]$.
Then $\sodv'=\Hymapei[\len]\group{\tilde{\sodv}}$, so indeed $\sodv=\bigcup\Hymapei[\len]\group{\tilde{\sodv}}$.
Because compatibility with $\soiv[{\permuts[\len]}]$ guarantees that $\sodv'$---and therefore also $\tilde{\sodv}$---is unique, we have shown the first statement to be true.

For the second statement, infer that indeed $\tilde{\sodv}=\Hymape[\len]\group{\sodv'}=\Hymap[\len]\group{\sodv}$ by Eq.~\eqref{eq:def Hymap on qsg}.

For the third statement, compatibility with $\soiv[{\permuts[\len]}]$ guarantees that $\sodv$ is coherent if and only if $\sodv'$ is coherent.
But since $\tilde{\sodv}$ is defined from $\sodv'$ using the linear order isomorphism $\Hymap[\len]$, we have immediately that $\tilde{\sodv}$ is coherent if and only if $\sodv'$ is coherent.
Therefore indeed $\sodv$ is coherent if and only if $\tilde{\sodv}$ is coherent.
\end{proof}

\begin{proof}[Proof of Theorem~\ref{theorem:finite exchangeability representation polynomial}]
Let $\cf''$ on $\gambleson[{\cvs[\len]}]$ and $\sodv''\subseteq\gambleson[{\cvs[\len]}]$ be the representing choice function and set of desirable gambles from Theorem~\ref{theorem:finite exchangeability representation}, and let $\rcf$ be defined by
\begin{equation*}
\CoMnmap[\len]\group{\g}\in\rcf\group{\CoMnmap[\len]\group{\os}}\iff\g\in\cf''\group{\os}
\end{equation*}
for all $\os$ in $\cfdom\group{\gambleson[{\cvs[\len]}]}$, and $\tilde{\sodv}\coloneqq\CoMnmap[\len]\group{\sodv''}$.
Since $\CoMnmap[\len]$ is a linear order isomorphism, then $\Mnmap[\len]\group{\g}\in\rcf\group{\Mnmap[\len]\group{\os}}\iff\Hymap[\len]\group{\g}\in\cf''\group{\Hymap[\len]\group{\os}}$ for all $\os$ in $\cfdom\group{\gambleson[\ps^{\len}]}$, and $\tilde{\sodv}=\CoMnmap[\len]\group{\Hymap[\len]\group{\sodv}}=\Mnmap[\len]\group{\sodv}$, and all the coherence properties are preserved, from which the statements follow.
\end{proof}

\begin{proof}[Proof of Prop.~\ref{prop:soiv countable exchangeability is coherent}]
For Axiom~\ref{coh soiv 1: 0 is indifferent}, since, by Prop.~\ref{prop:soiv finite exchangeability is coherent}, $0\in\soiv[{\permuts[\len]}]$ for every $\len$ in $\nats$, also $0\in\soiv[\permuts]$.
For Axiom~\ref{coh soiv 2: positive or negative vectors are not indifferent}, consider any $\g$ in $\soiv[\permuts]$, then there is some $\len$ in $\nats$ for which $\g\in\soiv[{\permuts[\len]}]$
By Prop.~\ref{prop:soiv finite exchangeability is coherent}, we infer that indeed $\g\not<0$ and $\g\not>0$.
For Axioms~\ref{coh soiv 3: scaling is indifferent} and~\ref{coh soiv 4: sum is indifferent}, consider any $\g[f_1]$, $\g[f_2]$ and $\g[f_3]$ in $\soiv[\permuts]$ and any $\lambda$ in $\reals$.
Then there are $n_i$ in $\nats$ such that $\g[f_i]\in\soiv[{\permuts[n_i]}]$, for every $i$ in $\set{1,2,3}$.
Let $n\coloneqq\max\set{n_1,n_2,n_3}$.
Then $\g[f_1]$, $\g[f_2]$ and $\g[f_3]$ are elements of $\soiv[{\permuts[\len]}]$, so $\lambda\g[f_1]\in\soiv[{\permuts[\len]}]$ and $\g[f_2]+\g[f_3]\in\soiv[{\permuts[\len]}]$ by Prop.~\ref{prop:soiv finite exchangeability is coherent}.
Then indeed $\lambda\g[f_1]\in\soiv[\permuts]$ and $\g[f_2]+\g[f_3]\in\soiv[\permuts]$.
\end{proof}

\begin{proof}[Proof of Prop.~\ref{prop:connection countable exchangeability and finite exchangeability}]
The proof for sets of desirable gambles, in a more general context, can be found in~\cite[Proposition~18]{debock2014}.

We give the proof for choice functions.
For necessity, assume that $\cf$ is exchangeable, or equivalently, that $\cf$ is compatible with $\soiv[\permuts]$.
Use Ref.~\cite[Proposition~31]{Vancamp2017} to infer that then, equivalently,
\begin{equation}\label{eq:prop:connection countable exchangeability and finite exchangeability:1}
\group{\forall\tg[h]\in\soiv[\permuts]}
\group{\forall\tos\in\cfdom\group{\gamblesfs}}
\group[\big]{\set{0,\tg[h]}\subseteq\tos
\then
\group{0\in\cf\group{\tos}\iff\tg[h]\in\cf\group{\tos}}}.
\end{equation}
Consider any $\len$ in $\nats$.
We need to prove that then $\cf_{\len}$ is compatible with $\soiv[{\permuts[\len]}]$, or equivalently, that
\begin{equation}\label{eq:prop:connection countable exchangeability and finite exchangeability:2}
\group{\forall\g[h]\in\soiv[{\permuts[\len]}]}
\group{\forall\os\in\cfdom\group{\gambleson[\ps^{\len}]}}
\group[\big]{\set{0,\g[h]}\subseteq\os
\then
\group{0\in\cf_{\len}\group{\os}\iff\g[h]\in\cf_{\len}\group{\os}}}.
\end{equation}
So consider any $\os$ in $\cfdom\group{\gambleson[\ps^{\len}]}$ such that $0\in\os$, and any $\g[h]$ in $\os$.
Then $\os$ is an element of $\cfdom\group{\gamblesfs}$ and $\g[h]$ an element of $\gamblesfs$, so $0\in\cf\group{\os}\iff\g[h]\in\cf\group{\os}$.
Therefore, by marginalisation, $0\in\cf_{\len}\group{\os}\iff\g[h]\in\cf_{\len}\group{\os}$, whence $\cf_{\len}$ is compatible with $\soiv[{\permuts[\len]}]$, and therefore indeed exchangeable.

For sufficiency, assume that $\cf_{\len}$ is exchangeable for every $\len$ in $\nats$---so it satisfies Eq.~\eqref{eq:prop:connection countable exchangeability and finite exchangeability:2} for every $\len$ in $\nats$.
We need to prove that then $\cf$ is exchangeable.
Using Eq.~\eqref{eq:prop:connection countable exchangeability and finite exchangeability:1}, it suffices to consider any $\tos$ in $\cfdom\group{\gamblesfs}$ such that $0\in\tos$, and any $\tg[h]$ in $\tos$, and prove that $0\in\cf\group{\tos}\iff\tg[h]\in\cf\group{\tos}$.
Since $\tos\cup\set{\tg[h]}$ consist of gambles of finite structure, there is some (sufficiently large) $\len$ in $\nats$ for which $\tos\in\cfdom\group{\gambleson[\ps^{\len}]}$ and $\tg[h]\in\gambleson[\ps^{\len}]$.
Then by Eq.~\eqref{eq:prop:connection countable exchangeability and finite exchangeability:2}, $0\in\cf_{\len}\group{\tos}\iff\tg[h]\in\cf_{\len}\group{\tos}$, so $0\in\cf\group{\tos}\iff\tg[h]\in\cf\group{\tos}$, whence $\cf$ is compatible with $\soiv[\permuts]$, and therefore indeed exchangeable.
\end{proof}

\begin{proof}[Proof of Prop.~\ref{prop:bernstein coherence choice functions}]
We only prove sufficiency, since necessity is trivial.
So consider any $\cftoo$ on $\polson$ such that for every $\len$ in $\nats$, $\cftoo_n$ is coherent.
We prove that then $\cftoo$ is coherent.

For Axiom~\ref{coh cf 1: irreflexivity}, consider any $\os$ in $\polson$.
Then every polynomial in $\os$ has a certain degree; let $\len$ be the maximum of those degrees.
Then $\os\in\cfdom\group{\polsond}$, whence indeed $\cftoo\group{\os}=\cftoo_n\group{\os}\neq\emptyset$, since $\cftoo_n$ is coherent.

For Axiom~\ref{coh cf 2: non-triviality}, consider any $\g[h_1]$ and $\g[h_2]$ in $\polson$ such that $\g[h_1]\vo[{\bern[]}]\g[h_2]$.
Then $\g[h_2]-\g[h_1]\in\Posi\group{\cset{\bern}{\cts\in\cvs[\len],n\in\nats}}$.
Let $n_1$ be the degree of $\g[h_1]$ and $n_2$ the degree of $\g[h_2]$, and let $n\coloneqq\max\set{n_1,n_2}$.
Then the degree of the polynomial $\g[h_2]-\g[h_1]$ is not higher than $\len$, so $\g[h_2]-\g[h_1]\in\polsond$ and therefore $\g[h_2]-\g[h_1]\in\Posi\group{\cset{\bern}{\cts\in\cvs[\len]}}$.
That guarantees that $\g[h_1]\vo[{\bern[]}]^n\g[h_2]$, whence indeed $\set{\g[h_2]}=\cftoo_n\group{\set{\g[h_1],\g[h_2]}}=\cftoo\group{\set{\g[h_1],\g[h_2]}}$, since $\cftoo_n$ is coherent.

For Axiom~\ref{coh cf 3}, consider any $\os$, $\os[1]$ and $\os[2]$ in $\cfdom\group{\polson}$, and let $\len$ be the highest degree that appears in $\os\cup\os[1]\cup\os[2]$.
Then $\os$, $\os[1]$ and $\os[2]$ all are elements of $\cfdom\group{\polsond}$.
For Axiom~\ref{coh cf 3a: alpha}, assume that $\cftoo\group{\os[2]}\subseteq\os[2]\setminus\os[1]$ and $\os[1]\subseteq\os[2]\subseteq\os$.
Then $\cftoo_n\group{\os[2]}\subseteq\os[2]\setminus\os[1]$ and therefore, since $\cftoo_n$ is coherent, indeed $\cftoo\group{\os}=\cftoo_n\group{\os}\subseteq\os\setminus\os[1]$.
For Axiom~\ref{coh cf 3b: aizerman}, assume that $\cftoo\group{\os[2]}\subseteq\os[1]$ and $\os\subseteq\os[2]\setminus\os[1]$.
Then $\cftoo_n\group{\os[2]}\subseteq\os[2]\setminus\os[1]$ and therefore, since $\cftoo_n$ is coherent, indeed $\cftoo\group{\os[2]\setminus\os}=\cftoo_n\group{\os[2]\setminus\os}\subseteq\os[1]$.

For Axiom~\ref{coh cf 4}, consider any $\g[h]$ in $\polson$, any $\lambda$ in $\posreals$ and any $\os[1]$ and $\os[2]$ in $\cfdom\group{\polson}$, and let $\len$ be the highest degree that appears in $\set{\g[h]}\cup\os[1]\cup\os[2]$.
Then $\g[h]\in\polsond$ and $\os[1]$ and $\os[2]$ both are elements of $\cfdom\group{\polsond}$.
Assume that $\os[1]\subseteq\cftoo\group{\os[2]}=\cftoo_n\group{\os[2]}$, then, since $\cftoo_n$ is coherent, indeed $\lambda\os[1]\subseteq\cftoo_n\group{\lambda\os[2]}=\cftoo\group{\lambda\os[2]}$ and $\os[1]+\set{\g[h]}\subseteq\cftoo_n\group{\os[2]+\set{\g[h]}}=\cftoo\group{\os[2]+\set{\g[h]}}$.
\end{proof}

\begin{proof}[Proof of Theorem~\ref{theorem:countable exchangeability representation}]
We first prove the representation theorem for choice functions.
That $\cf$ is exchangeable is, by Prop.~\ref{prop:connection countable exchangeability and finite exchangeability}, equivalent to, $\cf_{\len}$ is exchangeable, for every $\len$ in $\nats$.
Therefore, for all $\len$ in $\nats$, by Theorem~\ref{theorem:finite exchangeability representation polynomial}, that is equivalent to
\begin{equation*}
\cf_{\len}\group{\os}
=
\cset{\g\in\os}{\Mnmap[\len]\group{\g}\in\rcf_{\len}\group{\Mnmap[\len]\group{\os}}}
\text{ for all $\os$ in $\cfdom\group{\gambleson[\ps^{\len}]}$,}
\end{equation*}
where $\rcf_{\len}$ is uniquely given by
\begin{equation}\label{eq:countable representation:1}
\rcf_{\len}\group{\Mnmap[\len]\group{\os}}
=
\Mnmap[\len]\group{\cf_{\len}\group{\os}}
\text{ for all $\os$ in $\cfdom\group{\gambleson[\ps^{\len}]}$.}
\end{equation}
Consider any choice function $\rcf$ on $\cfdom\group{\polson}$ such that $\rcf\group{\Mnmap[\len]\group{\os}}=\rcf_{\len}\group{\Mnmap[\len]\group{\os}}$.
Since $\Mnmap[\len]$ is a surjection, we find that then $\rcf\group{\os'}=\rcf_{\len}\group{\os'}$ for all $\os'$ in $\cfdom\group{\polsond}$.
But every $\os'$ in $\cfdom\group{\polsond}$ can be identified with $\os\cap\polsond$ for some $\os$ in $\cfdom\group{\polson}$, so $\rcf\group{\os}=\rcf_{\len}\group{\os\cap\polsond}$ for all $\os$ in $\cfdom\group{\polson}$, where, again, we use the assumption that $\rcf_{\len}\group{\emptyset}=\emptyset$, and thus making $\rcf$ unique.
Then $\rcf\group{\Mnmap[\len]\group{\os}}=\bigcup_{n\in\nats}\rcf_{n}\group{\polsond[\simplex][n]\cap\Mnmap[\len]\group{\os}}=\rcf_{\len}\group{\Mnmap[\len]\group{\os}}$, so indeed
\begin{equation}
\cf_{\len}\group{\os}
=
\cset{\g\in\os}{\Mnmap[\len]\group{\g}\in\rcf\group{\Mnmap[\len]\group{\os}}}
\text{ for all $\os$ in $\cfdom\group{\gambleson[\ps^{\len}]}$,}
\end{equation}
proving the first and second statement.

For the third statement, infer from Prop.~\ref{prop:marginalisation preserves coherence} and Prop.~\ref{prop:coherence marginalisation finitary} that $\cf$ is coherent if and only if $\cf_{\len}$ is coherent for every $\len$ in $\nats$.
Infer from Theorem~\ref{theorem:finite exchangeability representation polynomial} that, for every $\len$ in $\nats$, $\cf_{\len}$ is coherent if and only if $\rcf_{\len}$ on $\polsond$ is coherent.
Infer that, by construction, $\rcf\group{\os}=\rcf_{\len}\group{\os}$ for every $\os$ in $\cfdom\group{\polsond}$, then Prop.~\ref{prop:bernstein coherence choice functions} tells us that indeed $\rcf_{\len}$ is coherent if and only if $\rcf$ on $\polson$ is coherent.

We now turn to the representation theorem for sets of desirable gambles.
That $\sodv$ is exchangeable is, by Prop.~\ref{prop:connection countable exchangeability and finite exchangeability}, equivalent to, $\sodv[\len]$ is exchangeable, for every $\len$ in $\nats$.
Therefore, for any $\len$ in $\nats$, by Theorem~\ref{theorem:finite exchangeability representation polynomial}, that is equivalent to $\sodv[\len]=\bigcup\Mnmapei[\len]\group{\tilde{\sodv}_{\len}}$ where $\tilde{\sodv}_{\len}$ is uniquely given by $\tilde{\sodv}_{\len}=\Mnmap[\len]\group{\sodv[\len]}$.
Infer that $\sodv[\len]=\bigcup\osin[{\sodv[\len]}]$ by the compatibility of $\sodv[\len]$ with $\soiv[{\permuts[\len]}]$.
Using Eq.~\eqref{eq:def Hymap on qsg}, infer that $\osin[{\sodv[\len]}]=\Mnmapei[\len]\group{\Mnmap[\len]\group{\sodv[\len]}}$, whence $\sodv[\len]=\bigcup\Mnmapei[\len]\group{\Mnmap[\len]\group{\sodv[\len]}}$, which is equal to $\Mnmapei[\len]\group{\bigcup_{k\in\nats}\Mnmap[k]\group{\sodv[k]}\cap\polsond}$.
This shows that $\sodv[\len]$ equals $\Mnmapei[\len]\group{\tilde{\sodv}\cap\polsond}$ for some $\tilde{\sodv}$, and that $\tilde{\sodv}$ is exactly given by $\tilde{\sodv}=\bigcup_{\len\in\nats}\Mnmap[\len]\group{\sodv[\len]}=\bigcup_{\len\in\nats}\tilde{\sodv[\len]}$, also proving its uniqueness.
This proves the first and the second statement.

For the third statement, since $\tilde{\sodv}=\bigcup_{\len\in\nats}\Mnmap[\len]\group{\sodv[\len]}$, we see that clearly $\tilde{\sodv}$ is coherent if and only every $\Mnmap[\len]\group{\sodv[\len]}$ is coherent, which is, by Theorem~\ref{theorem:finite exchangeability representation polynomial} equivalent to $\sodv[\len]$ is coherent for every $\len$ in $\nats$.
Now use Props.~\ref{prop:marginalisation preserves coherence} and~\ref{prop:coherence marginalisation finitary} to infer that this is equivalent to $\sodv$ is coherent, proving the third statement.
\end{proof}


\begin{thebibliography}{14}
\providecommand{\natexlab}[1]{#1}
\providecommand{\url}[1]{\texttt{#1}}
\expandafter\ifx\csname urlstyle\endcsname\relax
  \providecommand{\doi}[1]{doi: #1}\else
  \providecommand{\doi}{doi: \begingroup \urlstyle{rm}\Url}\fi

\bibitem[De~Bock et~al.(2016)De~Bock, Van~Camp, Diniz, and
  de~Cooman]{debock2014}
J.~De~Bock, A.~Van~Camp, M.~A. Diniz, and G.~de~Cooman.
\newblock Representation theorems for partially exchangeable random variables.
\newblock \emph{Fuzzy Sets and Systems}, 284:\penalty0 1--30, 2016.
\newblock \doi{10.1016/j.fss.2014.10.027}.

\bibitem[{de}~Cooman and Miranda(2012)]{cooman2011b}
G.~{de}~Cooman and E.~Miranda.
\newblock Irrelevance and independence for sets of desirable gambles.
\newblock \emph{Journal of Artificial Intelligence Research}, 45:\penalty0
  601--640, 2012.
\newblock \doi{10.1613/jair.3770}.

\bibitem[{d}e Cooman and Quaeghebeur(2012)]{cooman2010}
G.~{d}e Cooman and E.~Quaeghebeur.
\newblock Exchangeability and sets of desirable gambles.
\newblock \emph{International Journal of Approximate Reasoning}, 53\penalty0
  (3):\penalty0 363--395, 2012.
\newblock \doi{10.1016/j.ijar.2010.12.002}.
\newblock Precisely imprecise: A collection of papers dedicated to Henry E.
  Kyburg, Jr.

\bibitem[de~Cooman et~al.(2009)de~Cooman, Quaeghebeur, and
  Miranda]{cooman2006d}
G.~de~Cooman, E.~Quaeghebeur, and E.~Miranda.
\newblock Exchangeable lower previsions.
\newblock \emph{Bernoulli}, 15\penalty0 (3):\penalty0 721--735, 2009.
\newblock \doi{10.3150/09-BEJ182}.
\newblock URL \url{http://hdl.handle.net/1854/LU-498518}.

\bibitem[{d}e Finetti(1937)]{finetti1937}
B.~{d}e Finetti.
\newblock La pr\'evision: ses lois logiques, ses sources subjectives.
\newblock \emph{Annales de l'Institut Henri Poincar\'e}, 7:\penalty0 1--68,
  1937.
\newblock {E}nglish translation in \cite{kyburg1964}.

\bibitem[Johnson et~al.(1997)Johnson, Kotz, and Balakrishnan]{johnson1997}
N.~L. Johnson, S.~Kotz, and N.~Balakrishnan.
\newblock \emph{Discrete Multivariate Distributions}.
\newblock Wiley Series in Probability and Statistics. John Wiley and Sons, New
  York, 1997.

\bibitem[Kadane et~al.(2004)Kadane, Schervish, and Seidenfeld]{Kadane2004}
J.~B. Kadane, M.~J. Schervish, and T.~Seidenfeld.
\newblock A {Rubinesque} theory of decision.
\newblock \emph{Institute of Mathematical Statistics Lecture Notes-Monograph
  Series}, 45:\penalty0 45--55, 2004.
\newblock \doi{10.1214/lnms/1196285378}.
\newblock URL \url{http://www.jstor.org/stable/4356297}.

\bibitem[Kyburg~Jr. and Smokler(1964)]{kyburg1964}
H.~E. Kyburg~Jr. and H.~E. Smokler, editors.
\newblock \emph{Studies in Subjective Probability}.
\newblock Wiley, New York, 1964.
\newblock Second edition (with new material) 1980.

\bibitem[Rubin(1987)]{Rubin1987}
H.~Rubin.
\newblock A weak system of axioms for ``rational" behavior and the
  nonseparability of utility from prior.
\newblock \emph{Statistics \& Risk Modeling}, 5\penalty0 (1-2):\penalty0
  47--58, 1987.
\newblock \doi{10.1524/strm.1987.5.12.47}.

\bibitem[Seidenfeld(1988)]{seidenfeld1988}
T.~Seidenfeld.
\newblock Decision theory without “independence” or without “ordering”.
\newblock \emph{Economics and Philosophy}, 4:\penalty0 267--290, Oct. 1988.
\newblock \doi{10.1017/S0266267100001085}.

\bibitem[Seidenfeld et~al.(2010)Seidenfeld, Schervish, and
  Kadane]{seidenfeld2010}
T.~Seidenfeld, M.~J. Schervish, and J.~B. Kadane.
\newblock Coherent choice functions under uncertainty.
\newblock \emph{Synthese}, 172\penalty0 (1):\penalty0 157--176, 2010.
\newblock \doi{10.1007/s11229-009-9470-7}.

\bibitem[Van~Camp et~al.(2017)Van~Camp, {d}e Cooman, Miranda, and
  Quaeghebeur]{Vancamp2017}
A.~Van~Camp, G.~{d}e Cooman, E.~Miranda, and E.~Quaeghebeur.
\newblock Coherent choice functions, desirability and indifference.
\newblock \emph{Fuzzy sets and systems}, 2017.
\newblock Submitted for publication.

\bibitem[von Neumann and Morgenstern(1972)]{Neumann1944}
J.~von Neumann and O.~Morgenstern.
\newblock \emph{Theory of Games and Economic Behaviour}.
\newblock Princeton University Press, 3rd edition, 1972.
\newblock ISBN 0691041830.

\bibitem[Walley(1991)]{walley1991}
P.~Walley.
\newblock \emph{Statistical Reasoning with Imprecise Probabilities}.
\newblock Chapman and Hall, London, 1991.

\end{thebibliography}
\end{document}